\newcommand{\argmax}{\operatornamewithlimits{argmax}}
\newcommand{\CS}{\mathbf{C}^S}
\newcommand{\cs}{c^S}
\newcommand{\CD}{\mathbf{C}^D}
\newcommand{\cd}{c^D}
\newcommand{\CT}{\mathbf{C}^T}
\newcommand{\ct}{c^T}
\newcommand{\EC}{\boldsymbol{\mu}}
\newcommand{\ec}{\mu}
\newcommand{\secmargin}[1]{\ec_{{#1}..}}
\newcommand{\decmargin}[1]{\ec_{.{#1}.}}
\newcommand{\tecmargin}[1]{\ec_{..{#1}}}
\newcommand{\sdeg}{\delta^S}
\newcommand{\ddeg}{\delta^D}
\newcommand{\SDEG}{\boldsymbol{\delta}^S}
\newcommand{\DDEG}{\boldsymbol{\delta}^D}
\newcommand{\Like}{\mathcal{L}}
\newcommand{\mmodel}{\nu}
\newcommand{\mdata}{m}
\newcommand{\modldiss}{\Delta_{\textnormal{MODL}}}  
\newcommand{\sprob}{\mathbb{P}^S}
\newcommand{\scprob}{\mathbb{P}^S_C}
\newcommand{\dcprob}{\mathbb{P}^D_C}
\newcommand{\tcprob}{\mathbb{P}^T_C}
\newcommand{\sdcprob}{\mathbb{P}^{S,D}_C}
\newcommand{\fullcprob}{\mathbb{P}^{S,D,T}_C}
\begin{document}

\journalname{Advances in Data Analysis and Classification}

\title{Discovering Patterns in Time-Varying Graphs:\\ A Triclustering Approach}

\titlerunning{Discovering Patterns in Time Varying Graphs}        

\author{Romain Guigour\`es \and Marc Boull\'e \and \\ Fabrice Rossi}

\authorrunning{R. Guigour\`es \and M. Boull\'e \and F. Rossi} 

\institute{R. Guigour\`es and M. Boull\'e \at
							Orange Labs\\
              2 avenue Pierre Marzin \\
              22300 Lannion, France\\
              \email{romain.guigoures@gmail.com}\\
              \email{marc.boulle@orange.com}         
           \and
F. Rossi \at
SAMM EA 45 43\\
Universit\'e Paris 1\\
90 rue Tolbiac\\
75013 Paris, France\\
\email fabrice.rossi@univ-paris1.fr}

\date{}

\maketitle

\begin{abstract}
  This paper introduces a novel technique to track structures in time varying
  graphs. The method uses a maximum a posteriori approach for adjusting a
  three-dimensional co-clustering of the source vertices, the destination
  vertices and the time, to the data under study, in a way that does not require any 
  hyper-parameter tuning. 
  The three dimensions are simultaneously segmented in order to build clusters of
  source vertices, destination vertices and time segments where the edge
  distributions across clusters of vertices follow the same evolution over the time
  segments. The main novelty of this approach lies in that the time segments
  are directly inferred from the evolution of the edge distribution between
  the vertices, thus not requiring the user to make any a priori
  quantization. Experiments conducted on artificial data illustrate the good
  behavior of the technique, and a study of a real-life data set shows the
  potential of the proposed approach for exploratory data analysis.
\end{abstract}

\keywords{Co-clustering \and Time-Varying Graph \and Graph Mining \and Model Selection}

\newpage
\section{Introduction}
In real world problems, interactions between entities are generally evolving
through time. This is the case for instance in transportation networks (roads,
train, etc) or communication networks (mobile phone, web, etc). Understanding
the corresponding time evolving interaction graphs implies both to discover
structures in those graphs and to track the evolution of those structures
through time.
In a subway network for example, entities are the stations and interactions are 
the passenger journeys from an origin to a destination station at a given start time.
Understanding the evolving distribution of journeys over time is of great help
for network planners, for instance to schedule trains efficiently. 

Early works on the structure of the interactions in graphs dates back to the
$1950$s in the context of social networks analysis: \cite{Nadel1957} proposes
to group the actors that play similar \textit{roles} within the network. The
clustering of vertices -- that models the actors -- has been extensively
studied. The vast literature on graph partitioning is surveyed in 
such as the one of \cite{Schaeffer2007}, \cite{Goldenberg2009} and
\cite{Fortunato2010}, among others. 

The analysis of time-varying/time-evolving/dynamic graphs is
quite recent \citep{CasteigtsEtAl2012}. \cite{Hopcroft2004} have been first
interested in the evolution of the vertices clustering. In their approach, a
time-varying graph is modeled by a sequence of static graphs in which the
clusters are retrieved using an agglomerative hierarchical clustering, where
the similarity between the clusters is a cosine \citep{Jain1998}. Then, the
evolution of the clusters across the snapshots is investigated.  In more
recent works, \cite{Palla2007} adapt their own Clique Percolation Method
\citep{Palla2005} to time-evolving graphs by exploiting the overlap of the
clusters at $t$ and $t+1$ to study their evolution through time.
\cite{Xing2010} use a probabilistic approach to study the evolution of the
membership of each vertex to the clusters.  As for \cite{Faloutsos2007}, they
have introduced an information-theoretic based approach named
\textit{Graphscope}. It is a two-stage method dedicated to simple bipartite
graphs that tracks structures within time-varying graphs. First, a partition
of the snapshots is retrieved and evaluated using a MDL framework
\citep{Grunwald2007}, then an agglomerative process is used to determine the
temporal segmentation.  As discussed by \cite{Lang2009}, the partitioning 
results may be sensitive to the coding schemes: in particular, coding schemes
like those used by \cite{Faloutsos2007} have no guarantee of robustness
w.r.t. random graphs.  

The approaches introduced above focus on a specific way of introducing time
evolution into interaction analysis: they study a sequence of static
interaction graphs. This is generally done via a quantization of the time
which turns temporal interaction with possibly continuous time stamps into
said sequence of graphs. The quantization is mainly \emph{ad hoc}, generally
based on ``expert'' or ``natural'' discrete time scales (such as hourly graphs
or daily graphs) which lead to snapshots of the temporal interaction
structure. Then the clusters of vertices are detected separately from the time
quantization step hiding potential dependencies between those two aspects,
as well as possible intricate temporal patterns. \cite{Fortunato2010} has
raised these problems and considers more suitable the approaches that track
the clusters of vertices and the temporal structure in one unique step.

\textit{Co-clustering} is a way to address this requirement. This technique aims at simultaneously partitioning the variables
describing the occurrences in a data set \citep{Hartigan1972}. 
Co-clustering has been 
applied to gene expressions problems \citep{VanMechelen2004}
and has been widely used in documents classification \citep{Dhillon2003},
among other applications. An example of
the application of co-clustering to graphs is given by \cite{Rege2006} in the
case of static graphs. In this type of approaches, the graph is represented by
its \textit{adjacency matrix}: the rows and the columns correspond to the
vertices and the values in the cells quantify the edge intensities between two
vertices. The simultaneous partitioning of rows and columns coincide with the
clusters of vertices. One advantage of co-clustering is that it is able to deal
with nominal and numerical variables \citep{Bekkerman2005,Nadif2010}. Thus,
co-clustering approaches for static graphs can be adapted to time-evolving
graphs by introducing a third variable with temporal information. Such an
approach was explored in \cite{Zaki2005} in order to study the temporal
evolution of micro-array data. While the algorithm defined in this paper,
\emph{TriCluster}, uses the three-mode representation idea it aims at finding
patterns rather than at clustering the three dimensions together. Therefore
it shares only its data representation paradigm with the approach presented in
the present paper. A closer technique is presented by \cite{Schepers2006}
who  introduce a three-mode partitioning approach. They define a three
dimensional block model, that is optimized by minimizing a least squares loss function. To that end, the performances of several algorithms are investigated. The results shows that partitioning simultaneously all the three dimension provides better results than dealing with them independently. Moreover, \cite{Schepers2006} point out the difficulty of optimizing their global criterion and discuss the benefits of a multistart procedure. This is also treated in the present paper.

In this paper, we propose an approach for time-varying graphs built upon the
MODL approach of \cite{Boulle2010}. Our method groups vertices based on
similarity between connectivity patterns at the cluster level. In addition, it
partitions the time interval into time segments during which connectivity
patterns between the clusters are stationary. This corresponds to a
triclustering structure which is optimized jointly in our method, without
introducing any user chosen hyper-parameter (in particular, the number of
clusters is chosen automatically). This approach is resilient to noise and
reliable in the sense that no co-clustering structure is detected in case of
uniform random graphs (e.g. \cite{erdos_renyi_ORGI1959}) and that no 
time segmentation is retrieved in case of stationary graphs. In addition, the
true underlying distribution is asymptotically estimated. 

The rest of the paper is organized as follows. Section
\ref{sec:temp-inter-data} introduces the type of temporal interaction data our
model can handle. A combinatorial generative model for such data is described
in Section \ref{sec:gener-model-temp}. Section \ref{sec:parameter-estimation}
presents our Maximum A Posteriori strategy for estimating the parameters of
this model from a temporal data set. Section \ref{sec:exper-artif-data}
investigates the behavior of the method using artificial data. Finally, the
method is applied on a real-life data set in order to show its effectiveness
on a practical case in Section \ref{sec:exper-real-life}.
Finally, Section~\ref{sec:conclusion} gives a summary and suggests future work. 

\section{Temporal Interaction Data and Time-Varying Graph}\label{sec:temp-inter-data}
In this paper, we study interactions between entities that take place during a 
certain period of time. We assume given two finite sets $S$ and $D$ which are
respectively the set of sources (entities from which interactions start) and
the set of destinations (entities to which interactions are destined). Each
interaction is a triple $(s,d,t)\in S\times D\times \mathbb{R}$ where $t$ is
the instant at which the interaction takes place (in general $t$ is called the
time stamp of the interaction). In this paper a temporal interaction data set
is a finite set $E\subset S\times D\times \mathbb{R}$ made of $\mdata$ interaction
triples, $(s_n,d_n,t_n)_{1\leq n\leq \mdata}$.

Time stamps are assumed to be measured with enough precision to ensure that
each of the $t_n$ is unique among the $(t_j)_{1\leq j\leq \mdata}$ and thus
the third variable of a temporal interaction data set could be seen as a
continuous variable. However, to avoid contrast related effects and to
simplify data modeling, we use a rank based transformation: each $t_n$ is
replaced by its rank in $(t_j)_{1\leq j\leq \mdata}$, leading to an integer
valued variable. 

As pointed about in the introduction, interaction data are frequently
represented in graph forms. Taking into account the temporal aspect of
interactions has led to the introduction of several notions of time-varying 
(or dynamic, or evolving) graphs. A unifying framework is proposed in
\cite{CasteigtsEtAl2012} and can be specialized to address different temporal
notions. In this framework, a temporal interaction data set $E$ as defined
above corresponds to a time-varying graph given by the triple $\mathcal{G} =
(V,F,\rho)$, where $V=S\cup D$ is the set of vertices of the graph,
$F=\{(s,d)\in S\times D | \exists t\in\mathbb{R}, (s,d,t)\in E\}$ is the
projection of $E$ on $S\times D$ (giving the edges of the graph) and where the
presence function $\rho$ from $F\times \mathbb{R}$ to $\{0,1\}$ is given by
\begin{equation}
\rho(s,d,t)=\left\{
  \begin{array}{cl}
1& \text{if }(s,d,t)\in E,\\
0& \text{if }(s,d,t)\not\in E.  
  \end{array}
\right. 
\end{equation}
Thus, $E=(s_n,d_n,t_n)_{1\leq n\leq \mdata}$ can be seen a particular case of
time-varying graph, a fact that will prove useful in order to define a
generative model for such temporal interaction data. In this context the pair
of terms ``entity'' and ``vertex'', as well as the pair of terms ``edge'' and
``interaction'', are interchangeable. Nevertheless, we will standardize on the
graph related terminology (vertex and edge) to avoid confusion.

Notice that the temporal interaction data notion used here is quite general as
it can lead to simple directed graphs (where $S=D$ in general), but also to
bipartite graphs (when $S\cap D=\emptyset$). In addition, temporal interaction
data and thus time-varying graphs are inherently multigraphs (using the graph 
theory term): provided they have different time stamps, two edges can have
exactly the same source and destination vertices, allowing this way multiple
interactions to take place between the same actors at different moments. In
addition, undirected graphs can also be studied under this general paradigm.

Notice also that while we use interchangeably the terms ``temporal graph'',
``time-evolving graph'' and ''time-varying graph'', the first one is more
accurate than the others in the sense that we are studying a (multi)graph with
temporal information rather than e.g. a time series of graphs. Indeed each
time stamp is attached to one interaction rather than to a full
graph. However, we use also the terms ``time-evolving graph'' because we look
for time intervals in which the interaction pattern is stationary leading to a
time series of such fixed interaction patterns which can be seen as a
time-evolving graph (but at a coarser grain). By interaction pattern we mean
here a high level structure in a static graph, as seen in e.g. stochastic
block models \citep{Snijders2001}: for instance, in some situations, one might
partition the vertices into clusters such that the graph contains a small
number of edges between members of different clusters and a high number of
edges between members of the same cluster (this is a modular structure as
looked for by community detection algorithms see
e.g. \cite{Fortunato2010}). Figure 1 gives an example of four such patterns.

\section{A Generative Model for Temporal Interaction
  Data}\label{sec:gener-model-temp}
We propose in this paper a probabilistic modeling  \citep{murphy2012machine} of temporal interaction
data: we introduce a probabilistic model
that can generate data that resemble the observed data. The present Section
describes the model in details while Section \ref{sec:parameter-estimation}
explains how to fit the model to a given data by estimating its parameters. 

The model is inspired by the graph view of the data. As in a static
graph data analysis, we aim at producing a form of block model in which source
entities/vertices and destination entities/vertices are partitioned into
homogeneous classes (in terms of connectivity patterns). Therefore, the model
is based on a partition of the source set $S$ and on a partition of the
destination set $D$. Time is handled via a piecewise stationary
assumption. The model uses a partition of the time stamp ranks,
$\{1,\ldots,\mdata\}$, into consecutive subsequences (which correspond to time
intervals). Each subsequence is associated to a specific block model.

The initial view of the data as a three dimensional data set allows one to
interpret the block models as a triclustering. Indeed, each source vertex,
each destination vertex and each time stamp belongs to a cluster of the
corresponding set (respectively $S$, $D$ and $\mathbb{R}$). In addition,
clusters of time stamps respect the natural ordering of time (as they are
consecutive subsequences).

As described below, the model is based on a combinatorial view of temporal
interaction data rather than on the continuous parameter based model used in
classical block models. It is based on the MODL approach of \cite{Boulle2010}
which addresses density estimation via this type of combinatorial model.

\subsection{Notations and definitions}
In order to define our generative model, we need first to introduce some 
notations and vocabulary. Given a set $A$, $|A|$ is the cardinality of $A$. As 
explained in Section \ref{sec:temp-inter-data}, time stamps are transformed
into ranks. Thus the set of time stamps is $\{1, 2, \ldots, \mmodel\}$ where $\mmodel$
is the number of edges/interactions\footnote{To avoid confusion, we denote $\mmodel$
the number of edges as a parameter of the model and $\mdata$ the
number of edges in a given data set.}. A partition of $\{1, 2, \ldots, \mmodel\}$ respects
its ordering if and only if given any pair of distinct classes of the
partition, $c_1$ and $c_2$, all the elements of $c_i$ are smaller than all the
elements of $c_j$ either for $i=1$ and $j=2$ or for $i=2$ and $j=1$. Obviously,
classes of a partition that respects the order of $\{1, 2, \ldots, \mmodel\}$ are
consecutive subsequences of $\{1, 2, \ldots, \mmodel\}$. We call any such
consecutive subsequence an \emph{interval} because it represents a time
interval in the original data set. For instance the subsequence $\{1, 2, 3\}$
represents the time interval ranging from the oldest time stamp in the data
set (the first one) to the third one in the data set.

Given three sets $A$, $B$ and $C$ and three partitions $P_A$,
$P_B$ and $P_C$ of those sets, a \emph{tricluster} is the Cartesian product of a class of
each partition, that is $a\times b\times c$ with $a\in P_A$, $b\in P_B$ and
$c\in P_c$. It is a subset of $A\times B\times C$ by construction, and the set
of all triclusters generated by $P_A$, $P_B$ and $P_C$ forms a partition of
$A\times B\times C$, called a \emph{triclustering}. 

For instance if $A=\{x, y, z\}$, $B=\{1, 2, 3, 4\}$ and $C=\{\alpha, \beta\}$,
elements of $A\times B\times C$ are the triplets $(x, 1, \alpha)$,
$(z, 3, \beta)$, etc. A way to build a very structured clustering, called a
triclustering, of $A\times B\times C$ consists in building three clusterings:
one for $A$, e.g. $A=\{x, z\}\cup \{y\}$, one for $B$, e.g.
$B=\{1, 2\}\cup\{3, 4\}$ and one for $C$, e.g. $C=\{\alpha\}\cup
\{\beta\}$.
Then the clustering of $A\times B\times C$ if made of the Cartesian products
of the clusters of $A$, $B$ and $C$. One of such cluster is
$\{x, z\}\times \{1, 2\}\times \{\alpha\}$ which contains the following
triplet:
\[
\{(x, 1, \alpha), (x, 2, \alpha), (z, 1, \alpha), (z, 2, \alpha)\}. 
\]
Other clusters of this clustering are $\{x, z\}\times \{3, 4\}\times
\{\beta\}$, etc. 

\subsection{Model parameters}
As explained above, our generative model is based on a triclustering. The
partitions of the source and destination vertices are considered as parameters
of the model, together with a series of other parameters described below. We
list here all the parameters, but consistency constraints on the model prevent
those parameters to be chosen arbitrarily. The constraints and our choice of
free parameters are explained in the next subsection.

In the end, all parameters will have been estimated on the basis of the data.

Given a set of source vertices $S$, a set of destination vertices $D$,
the model uses the following parameters:
\begin{enumerate}
\item $\mmodel$, the number of edges to generate;
\item $\CS=(\cs_1,\ldots,\cs_{k_S})$, the partition of the source
  vertices into $k_S$ clusters;
\item $\CD=(\cd_1,\ldots,\cd_{k_D})$, the partition of the destination
  vertices into $k_D$ clusters;
\item $\CT=(\ct_1,\ldots,\ct_{k_T})$, the partition of the time stamp ranks
  $\{1,\ldots,\mmodel\}$ into $k_T$ clusters. This partition must respect the order
  of the ranks (clusters are intervals/consecutive subsequences);
\item $\EC=\{\ec_{ijl}\}_{1 \leq i \leq k_S, 1 \leq j \leq k_D, 1 \leq l \leq
    k_T}$, the number of edges that will be generated
  by the tricluster indexed by $(i,j,l)$. More precisely, for each tricluster
  $\cs_i\times\cd_j\times \ct_l$ the model will generate $\ec_{ijl}$ edges with sources
  in $\cs_i$, destinations in $\cd_j$ and time stamps in $\ct_l$;
\item $\SDEG=\{\sdeg_{s}\}_{s \in S}$, the out-degree of each source vertex
  $s$. In other words, $\sdeg_s$ is the number of edges generated by
  the model for which the source vertex is $s$;
\item $\DDEG=\{\ddeg_{d}\}_{d\in D}$, the in-degree of each destination
  vertex. In other words, $\ddeg_d$ is the number of edges generated by
  the model for which the destination vertex is $d$. 
\end{enumerate}
Notice that $\CS$, $\CD$ and $\CT$ build a triclustering of the set $S\times
D\times \{1,\ldots,\mmodel\}$. Each tricluster consists here in a cluster of
source vertices, a cluster of destination vertices and an interval of time
stamp ranks. 

\subsection{Constrained and free parameters}
The parameters described in the previous subsection have to satisfy some
constraints. The most obvious one links $\EC$ to $\mmodel$ by
\begin{equation}
\mmodel=\sum_{1 \leq i \leq k_S, 1 \leq j \leq k_D, 1 \leq l \leq k_T}\ec_{ijl}.
\end{equation}
To introduce the other constraints, we will use classical marginal count 
notations applied to the three dimensional array $\EC$, that is
\begin{align}
\secmargin{i}&=\sum_{1 \leq j \leq k_D, 1 \leq l \leq k_T}\ec_{ijl},\\
\decmargin{j}&=\sum_{1 \leq i \leq k_S, 1 \leq l \leq k_T}\ec_{ijl},\\
\tecmargin{l}&=\sum_{1 \leq i \leq k_S, 1 \leq j \leq k_D}\ec_{ijl}.
\end{align}
In theses notations, a dot $.$ indicates that a sum is made over all possible
values of the corresponding index.

Degrees must be consistent with edges produced by each cluster. We
have therefore
\begin{equation}
\forall i \in\{1,\ldots,k_S\}, \sum_{s\in\cs_i}\sdeg_{s}=\secmargin{i},
\end{equation}
and
\begin{equation}
\forall j \in\{1,\ldots,k_D\}, \sum_{d\in\cd_j}\ddeg_{d}=\decmargin{j}.
\end{equation}
Indeed, all the edges that have a source in e.g. $\cs_i$ must have been
generated by triclusters of the form $\cs_i\times\cd\times\ct$ where $\cd$ and
$\ct$ are arbitrary clusters of destination vertices and time stamps,
respectively. The left hand part of the equation counts those edges by
summing the degrees in $\cs_i$ while the right hand part counts them by
summing the edge counts in the triclusters.

There is a much stronger link between $\CT$ and $\EC$. As for the other
clusters, marginal consistency is needed and therefore we have
\begin{equation}
\forall l\in\{1,\ldots,k_T\}, \left|\ct_l\right|=\tecmargin{l}.
\end{equation}
The consistency equation is simpler than in the case of source/destination
clusters because the time stamp ranks are unique and there is no ``degree''
attached to them. 

In addition, as $\CT$ respects the order of $\{1,\ldots,\mmodel\}$, its
classes can be reordered such that $\ct_1$ contains the smallest ranks,
$\ct_2$ the second smallest ranks, etc. Then as the classes are consecutive
subsequences, the only possible partition is given by
\begin{equation}
\CT=\left(\left\{1,\ldots,\tecmargin{1}\right\},\left\{\tecmargin{1}+1,\ldots,\tecmargin{1}+\tecmargin{2}\right\},\ldots,\left\{\sum_{l=1}^{k_t-1}\tecmargin{l}+1,\ldots,\mmodel\right\}\right).
\end{equation}

In practical terms, this means that up to a renumbering of its classes, there
is a unique partition $\CT$ of the time stamp ranks that respects their order
and that is compatible with a given $\EC$. Then $\CT$ can be seen as a bound
parameter. Notice that we could on the contrary leave $\CT$ free and then
obtain constraints on $\EC$. This would be more complex to handle in terms of
the prior distribution on the parameters. 

In the rest of the paper, we denote $\mathcal{M}$ a complete 
list of values for the free parameters of the model, that is
$\mathcal{M}=(\mmodel,\CS,\CD,\EC,\SDEG,\DDEG)$. We assume implicitly that
$\mathcal{M}$ fulfills the constraints outlined above. In addition, even when
we use this choice of free parameters, a value of $\mathcal{M}$ will be called
a \emph{triclustering}. In particular, $\CT$ will always denote the time
stamp partition uniquely defined by $\mathcal{M}$. We will also always denote
$k_S$, $k_D$ and $k_T$ the number of clusters in each of the three
partitions.

\paragraph{An example:}
to illustrate the parameter space, a simple example is described below. The
source set is $S=\{1,\ldots,6\}$ and the destination set is $D=\{a, b, \ldots,
h\}$. We fix $\mmodel=50$ and thus the time stamp ranks form the set
$\{1,\ldots, 50\}$. We choose 3 source clusters 
\[
\CS=\{\{1, 2, 3\}, \{4, 5\}, \{6\}\},
\]
2 destination clusters 
\[
\CD=\{\{a, b, c, d, e\},\{f, g, h\}\},
\]
and 3 time clusters (unspecified yet as they will be consequences of $\EC$). A
possible choice for $\EC$ is given by the following tables
\begin{center}
  \begin{tabular}{c|c|c|}
    &$\cd_1$&$\cd_2$\\\hline
    $\cs_1$ &5 & 1\\\hline
    $\cs_2$ &2 & 0\\\hline
    $\cs_3$ &4 & 0 \\\hline
    \multicolumn{3}{c}{$\ct_1$}
  \end{tabular}
  \hspace{1em}
  \begin{tabular}{c|c|c|}
    &$\cd_1$&$\cd_2$\\\hline
    $\cs_1$ &2 & 2\\\hline
    $\cs_2$ &2 & 5 \\\hline
    $\cs_3$ &5 & 5\\\hline
    \multicolumn{3}{c}{$\ct_2$}
  \end{tabular}
  \hspace{1em}
  \begin{tabular}{c|c|c|}
    &$\cd_1$&$\cd_2$\\\hline
    $\cs_1$ &0 &0\\\hline
    $\cs_2$ &1 &0\\\hline
    $\cs_3$ &1 &15\\\hline
    \multicolumn{3}{c}{$\ct_3$}
  \end{tabular}
\end{center}
There is one table per time stamp interval and in each table the rows
correspond to the three source clusters while the columns correspond to the
two destination clusters. For instance $\ec_{111}=5$. Notice that the sum of
all the numbers in the table cells equals $\mmodel=50$, as imposed by the constraints. 

Marginal counts induced by $\EC$ are then
\[
\begin{array}{l|ccc}
i&1&2&3\\\hline
\secmargin{i}&10&10&30
\end{array}
\quad\quad
\begin{array}{l|cc}
j&1&2\\\hline
\decmargin{j}&22&28
\end{array}
\quad\quad
\begin{array}{l|ccc}
l&1&2&3\\\hline  
\tecmargin{l}&12&21&17
\end{array}
\]

They are compatible, for instance, with the following out degrees $\SDEG$
\[
  \begin{array}{l|*{3}{c}|*{2}{c}|c}
s&    1 & 2 & 3 & 4 & 5 & 6  \\\hline
\sdeg_s &3 & 6 & 1 & 2 & 8 & 30 
  \end{array}
\]
and in degrees $\DDEG$
\[
  \begin{array}{l|*{5}{c}|*{3}{c}}
d&    a & b & c & d & e & f & g & h \\\hline
\ddeg_d &3 & 6 & 2 & 6 & 5 & 13 & 8 & 7 
\end{array}
\]
As explained above, the only possible time stamp rank partition is then 
\[
\CT=\{\{1,\ldots,12\},\{13,\ldots,33\},\{34,\ldots,50\}\}.
\]

\subsection{Data generating mechanism in the proposed model} 

Given the parameters $\mathcal{M}$, a temporal data set
$E=(s_n,d_n,t_n)_{1\leq n\leq \mmodel}$ is generated by a hierarchical distribution
build upon uniform distributions. 
\begin{enumerate}
\item The $\mmodel$ edges are generated by first choosing which one of
  $k_S \times 
  k_D \times k_T$ triclusters is responsible for generating each of the
  edges. This is done by assigning 
  each of the $\mmodel$ edges to a tricluster under the constraints given by the
  assignment $\EC$. All compatible mappings from edges to triclusters are
  considered equiprobable. Then a given mapping $E_{map}$ has a probability of one
  divided by the number of compatible mappings, that is:
  \begin{equation}
P(E_{map}|\mathcal{M})=\dfrac{ \prod_{i=1}^{k_S} \prod_{j=1}^{k_D} \prod_{l=1}^{k_T} \ec_{ijl}!}{\mmodel!}.
\end{equation}

\item In two independent second steps, edges are mapped to source vertices and
  destination vertices. Indeed, each source cluster $C^S_i$ is responsible for
  generating $\secmargin{i}$ edges under the assignment constraints specified by
  the degrees of the source vertices (and similarly for destination
  vertices). As in the previous step, all mappings from the edges assigned to
  a cluster to its vertices that are compatible with the assignment are
  considered equiprobable. In addition, mappings are independent from cluster
  to cluster. Then a given source  mapping $S_{map}$ and a destination mapping
  $D_{map}$ have the following probabilities:
  \begin{equation}
P(S_{map}|\mathcal{M})=\dfrac{  \prod_{s\in S}\sdeg_s!}{
  \prod_{i=1}^{k_S}\secmargin{i}!},\quad P(D_{map}|\mathcal{M})=\dfrac{ \prod_{d \in D}\ddeg_d!}{\prod_{j=1}^{k_D}\decmargin{j}!}.
\end{equation}

\item Based on the previous steps, each edge has now a source vertex and a
  destination vertex. Its time stamp is obtained in a similar but simpler marginal
  procedure. Indeed inside a time interval, we simply order the edges in an arbitrary
  way, using a uniform probability on all possible orders. Orders are also
  independent from one interval to another. Then a given time ordering of the
  edges $T_{order}$ has a probability:
  \begin{equation}
P(T_{order}|\mathcal{M})=\dfrac{1}{ \prod_{l=1}^{k_T} \tecmargin{l}!}.
\end{equation}
\end{enumerate}
\paragraph{An example (continued):} using the parameter list given as an example in the previous subsection, we
can generate a temporal data set. As a first step, we assign the 50 edges to
the 18 triclusters (in fact only to the 13 triclusters with non zero values in
$\EC$). To simplify the example, we choose the assignment in which edges are
generated from 1 to 50 by the first available tricluster in the lexicographic
order on the indexing triple $(i,j,l)$. This means that edges 1 to 5  are generated by
the tricluster $(1,1,1)$, that is $\cs_1\times\cd_1\times\ct_1$, then edges 6 and
7 are generated by tricluster $\cs_1\times\cd_1\times\ct_2$, then edge 8 by
tricluster $\cs_1\times\cd_2\times\ct_1$ (we skip
$\cs_1\times\cd_1\times\ct_3$ because $\ec_{113}=0$), etc. This is summarized
in the following tables:
\begin{center}
  \begin{tabular}{c|c|c|}
    &$\cd_1$&$\cd_2$\\\hline
    $\cs_1$ &$\{1, \ldots, 5\}$ & $\{8\}$\\\hline
    $\cs_2$ &$\{11, 12\}$ & $\emptyset$\\\hline
    $\cs_3$ &$\{21, \ldots, 24\}$ & $\emptyset$ \\\hline
    \multicolumn{3}{c}{$\ct_1$}
  \end{tabular}
  \hspace{1em}
  \begin{tabular}{c|c|c|}
    &$\cd_1$&$\cd_2$\\\hline
    $\cs_1$ &$\{6, 7\}$ & $\{9, 10\}$\\\hline
    $\cs_2$ &$\{13, 14\}$ & $\{16, \ldots, 20\}$ \\\hline
    $\cs_3$ &$\{25, \ldots, 29\}$ & $\{31, \ldots, 35\}$\\\hline
    \multicolumn{3}{c}{$\ct_2$}
  \end{tabular}
  \hspace{1em}
  \begin{tabular}{c|c|c|}
    &$\cd_1$&$\cd_2$\\\hline
    $\cs_1$ &$\emptyset$ &$\emptyset$\\\hline
    $\cs_2$ &$\{15\}$ &$\emptyset$\\\hline
    $\cs_3$ &$\{30\}$ &$\{36, \ldots, 50\}$\\\hline
    \multicolumn{3}{c}{$\ct_3$}
  \end{tabular}
\end{center}
The edges are assigned variable per variable. For instance vertices in
$\cs_1$ are the source vertex for the following edges
\[
\{1, \ldots, 5\} \cup \{8\} \cup \{6, 7\} \cup \{9, 10\}=\{1, \ldots, 10 \}.
\]
Using the degree constraints $\sdeg_1, \sdeg_2$ and $\sdeg_3$, one possible
assignment is
\[
  \begin{array}{r|*{10}{c|}}
\text{edge} & 1 & 2 & 3 & 4 & 5 & 6 & 7 & 8 & 9 & 10\\\hline
\text{source}&    2 & 2 & 1 & 2 & 1 & 3 & 2 & 1 & 2 & 2
  \end{array}
\]
Similarly, vertices in $\cd_1$ are the destination vertex for the following
edges
\[
\{1, \ldots, 5\} \cup \{6, 7\} \cup \{11, 12\} \cup \{13, 14\} \cup \{15\} \cup 
\{21, \ldots, 24\} \cup \{25, \ldots, 29\}  \cup \{30\},
\]
which can be obtained using the following assignment
\[
  \begin{array}{r|*{22}{c|}}
\text{edge} & 1 & 2 & 3 & 4 & 5 & 6 & 7 & 11 & 12 & 13 & 14 & 15 & 21 & 22 & 23 & 24 & 25 & 26 & 27 & 28 & 29 & 30
\\\hline
\text{destination}&  d & d & e & a & b & a & b & e & d & d & b & b & b & d & a & e & c & d & e & e & b & c 
  \end{array}
\]
Finally, time stamp ranks are assigned in a similar way. For instance time
stamp ranks from $\{1, \ldots, 12 \}$ are assigned to edges
\[
\{1, \ldots, 5\} \cup \{8\} \cup \{11, 12\} \cup \{21, \ldots, 24\},
\]
for instance by
\[
  \begin{array}{r|*{12}{c|}}
\text{edge} & 1 & 2 & 3 & 4 & 5 & 8 & 11 & 12 & 21 & 22&23&24\\\hline
\text{time stamp rank}&   5 & 7 & 10 & 4 & 8 & 2 & 9 & 6 & 1 & 3 & 12 & 11 
  \end{array}
\]
At the end of this process, a full temporal data set is generated. In our
working example, the first five edges are
\[
\begin{array}{c|ccc}
\text{edge}&\text{source}&\text{destination}&\text{time stamp rank}\\\hline
1&2&d&5\\ 
2&2&d&7\\
3&1&e&10\\
4&2&a&4\\
5&1&b&8
\end{array}
\]

\subsection{Likelihood function}
Because of the combinatorial nature of the proposed model, the likelihood
function has a peculiar form. Let $E=(s_n,d_n,t_n)_{1\leq n\leq \mdata}$  be a
temporal data set. The likelihood function $\Like(\mathcal{M}|E)$ takes a non
zero value if and only if $\mathcal{M}$ and $E$ are compatible according to
the following definition. 
\begin{definition}\label{def:compatibility}
A temporal data set $E=(s_n,d_n,t_n)_{1\leq n\leq \mdata}$ and a parameter
list $\mathcal{M}=(\mmodel,\CS,\CD,\EC,\SDEG,\DDEG)$ are \emph{compatible} if and
only if:
\begin{enumerate}
\item $\mdata=\mmodel$;
\item for all $s\in S$, $\sdeg_s=|\{n\in\{1,\ldots,\mdata\}|s_n=s\}|$;
\item for all $d\in D$, $\ddeg_d=|\{n\in\{1,\ldots,\mdata\}|d_n=d\}|$;
\item for all $i\in\{1,\ldots,k_S\}$, $j\in\{1,\ldots,k_D\}$ and
  $l\in\{1,\ldots,k_T\}$,
\begin{equation}
\EC_{ijl}=\left|\left\{\{n\in\{1,\ldots,\mdata\}|s_n\in\cs_i, d_n\in\cd_j, t_n\in\ct_l\right\}\right|.
\end{equation}
\end{enumerate}
\end{definition}
Based on this definition, the likelihood function is equal to zero when
$\mathcal{M}$ and $E$ are not compatible and is given by the following formula
when they are compatible
\begin{equation}
\Like(\mathcal{M}|E)=\dfrac{ \left(\prod_{i=1}^{k_S} \prod_{j=1}^{k_D}
  \prod_{l=1}^{k_T} \ec_{ijl}!\right)\left(\prod_{s\in
    S}\sdeg_s!\right)\left(\prod_{d \in D}\ddeg_d!\right)}{\mmodel!
\left(\prod_{i=1}^{k_S}\secmargin{i}!\right)
\left(\prod_{j=1}^{k_D}\decmargin{j}!\right)\left(\prod_{l=1}^{k_T}\tecmargin{l}!\right)}.
\end{equation}
Notice that while the formula is expressed in terms of the parameters
$\mathcal{M}$ only, it depends obviously on the characteristics of the data
set $E$, via the compatibility constraints between $\mathcal{M}$ and $E$.

One of the interesting properties of the likelihood function is that it
increases when the block structure associated to the triclustering
``sharpens'' in the following sense: the likelihood increases when the number
of empty triclusters ($\ec_{ijl}=0$) increases.  

\section{Parameter estimation}\label{sec:parameter-estimation}
In order to adjust the parameters $\mathcal{M}$ of our model to a temporal
data set $E$, we use a Maximum A Posteriori (MAP) approach where the estimator
for the parameters is given by
$\mathcal{M}^*=\argmax_{\mathcal{M}}P(\mathcal{M}) P(E|\mathcal
{M})$. Together with a non informative prior distribution on the parameters,
this enables us to adjust all the parameters of the model without introducing
any user chosen hyper-parameter. In addition, the chosen prior distribution
penalizes complex models to limit the risk of overfitting.

The model is designed in such a way that when the number of edges in
$\mathcal{M}$ is $\mmodel$, then all temporal data sets generated have exactly
$\mmodel$ edges. In an estimation context, we fix therefore
directly $\mmodel=\mdata$ where $\mdata$ is the observed number of
edges. This can be seen as fixing $\mmodel$ to its MAP estimate as the
likelihood of $\mathcal{M}$ given $E$ is zero when $\mmodel\neq\mdata$. For
the rest of the parameters, we specify a non informative prior distribution as
follows. 

\subsection{Prior distribution on the parameters}
The prior is built hierarchically and uniformly at each stage in order to be
uninformative. This is done as follows: 
\begin{enumerate}
\item For source and destination partitions, a maximal number of clusters is drawn uniformly at
  random between $1$ and the cardinality of the set to cluster (for instance 
  $|S|$ for the set of source vertices). For the time stamps partition, the number
  of clusters is drawn in the same way. We obtain this way $k_S^{\max}$, $k_D^{\max}$
  and $k_T$ with the associated probability distribution:
  \begin{equation}
p(k_S^{\max}) = \dfrac{1}{|S|}, \quad p(k_D^{\max}) = \dfrac{1}{|D|} , \quad p(k_T) = \dfrac{1}{\mdata}.    
  \end{equation}
  The case with one single cluster corresponds to the \emph{null
    triclustering}, where there is no significant pattern within the
  graph. The other extreme case corresponds to the most refined triclustering
  where each vertex plays a role that is significantly specific to be clustered alone: the
  triclustering has as many clusters as vertices (on both source and
  destination). In social networks analysis, both
  extreme clustering structures are consistent with the notion of
  \textit{regular equivalence} 
  introduced in the works of \cite{White1983} and \cite{Borgatti1988}. 

  The case with one time segment corresponds to a stationary graph over
  time. The one with as many time segments as edges is an extremely
  fine-grained quantization: as time is a continuous variable, this case is
  allowed in our approach. It can appear when the connectivity patterns are
  gradually changing over time in a very smooth way, see Section
  \ref{sec:exper-artif-data} for an example.

  Notice that this prior is given for the sake of mathematical soundness, but
  in practice, it has no effect on the MAP criterion as it does not depend on
  the actual values $k_S^{\max}$, $k_D^{\max}$ and $k_T$, but only on fixed
  quantities $|S|$, $|D|$ and $\mmodel$ (the latter been fixed in the MAP
  context). 

\item Given the maximal number of clusters, partitions are equiprobable among the
  partitions with at most the specified maximal number of clusters, that is
  \begin{equation}
 p(\CS | k^{\max}_S) = \dfrac{1}{B(|S|,k_S^{\max})} , \quad p(\CD | k^{\max}_D) = \dfrac{1}{B(|D|,k_D^{\max})},
\end{equation}
where $B(|S|,k^{\max}_S)=\sum_{k=1}^{k^{\max}_S} S(|S|,k)$ is the sum of Stirling numbers of 
the second kind, i.e the number of ways of partitioning $|S|$ elements into $k$
non-empty subsets.  

At this step, the prior does not favor any particular structure in the
partition of vertices beside their number of clusters (partitions will low
number of clusters are favored over partitions with a high number of
clusters). It depends indeed only on $k^{\max}_S$ and $k^{\max}_D$ not on
the actual partitions. 

This is quite different from e.g. \cite{Kemp2006} where a Dirichlet
process is used as a prior on the number of clusters and on the distribution
of vertices on the clusters. Such a prior favors a structure with a few
populated clusters and several smaller clusters and penalizes balanced
clustering models. Our approach overcomes this issue owing to the choice of
its prior (see also below). 

\item  For a triclustering with $k_S$ source, $k_D$ destination clusters and $k_T$
  time segments, assignments of the $\mdata$ edges on the $k_S
  \times k_D \times k_T$ triclusters are equiprobable. 
It is known that the number of such assignments (i.e. the
$k_S\times k_D\times K_T$ numbers $\EC$ which sum to $\mdata$) 
is $\binom{\mdata+k_Sk_Dk_T-1}{k_Sk_Dk_T-1}$, leading to 
\begin{equation}
p(\EC|k_S,k_D,k_T) = \frac{1}{\dbinom{\mdata+k_Sk_Dk_T-1}{k_Sk_Dk_T-1}}.
\end{equation}
Notice that this prior penalizes a high number of triclusters. As the numbers
of vertex clusters are already penalized before (via the number of
partitions), this has mostly an effect on the number of time intervals $k_T$.
\item Similarly, for each source cluster $\cs_i$, the out-degrees of
  the vertices are chosen uniformly at random among the degree lists that
  sums to $\secmargin{i}$, as
  requested by the constraints (this holds
  also for destination clusters), which leads to 
  \begin{equation}
 p(\{\sdeg_{s}\}_{s\in\cs_i} | \EC , \CS) = \dfrac{1}{\dbinom{\secmargin{i} + |\cs_i| -1}{|\cs_i| -1}}, 
   \end{equation}
and similarly to
\begin{equation}
p(\{\ddeg_{d}\}_{d\in\cd_j} | \EC, \CD) = \dfrac{1}{\dbinom{\decmargin{j} + |\cd_j| -1}{|\cd_j| -1}}.
\end{equation}
  For a given assignment $\EC$, this prior penalizes large clusters
  (in terms of degree, i.e. high values of $|\cs_i|$ or $|\cd_j|$), or in other words, it favors balanced
  partitions (with clusters of the same sizes, again in terms of degree). For given partitions, the
  prior penalizes high marginal counts, in particular in large (degree) clusters. 
\end{enumerate}

Overall, the prior is rather flat, as it is uniform at each level of the hierarchy of the parameters.
It does not make strong assumptions and let the data speak for themselves, as
the prior terms vanish rapidly compared to the likelihood terms. Notice that
other prior distribution could be considered, especially if expert knowledge
is available. 

\subsection{The MODL criterion}\label{sec:modl-criterion}
The product of the prior distribution above and of likelihood term obtained in
the previous section results in a posterior
probability, the negative log of which is used to build the criterion
presented in Definition~\ref{def:MODL}.

\begin{definition}[MODL Criterion]\label{def:MODL}
  According to the MAP approach, the best adjustment of the model and the
  temporal data set $E$ is obtained when triclustering $\mathcal{M}$ is
  compatible with $E$ (according to Definition \ref{def:compatibility}) and
  minimizes the following criterion:
\begin{small}\begin{align}
&c(\mathcal{M}) = \log |S| + \log |D| + \log \mdata + \log B(|S|,k_S) + \log B(|D|,k_D) \nonumber \\
 &+\log \displaystyle \binom{\mdata+k_Sk_Dk_T-1}{k_Sk_Dk_T - 1}+ \displaystyle\sum_{i=1}^{k_S} \log \binom{\secmargin{i} + |\cs_{i}| -1}{|\cs_{i}| -1} + \displaystyle\sum_{j=1}^{k_D} \log \binom{\decmargin{j} + |\cd_j| -1}{|\cd_j| -1} \nonumber \\
 &+\log \mdata! - \displaystyle \sum_{i=1}^{k_S}\sum_{j=1}^{k_D} \sum_{l=1}^{k_T} \log \ec_{ijl}!  + \displaystyle \sum_{l=1}^{k_T} \log \tecmargin{l}!
\nonumber \\
 &  + \displaystyle \sum_{i=1}^{k_S} \log \secmargin{i}! - \displaystyle \sum_{s \in S} \log \sdeg_s! + \displaystyle \sum_{j=1}^{k_D} \log \decmargin{j}! - \displaystyle \sum_{d \in D} \log \ddeg_d!.
\label{eq:MODL}
\end{align}\end{small}
\end{definition}
It is important to note that the quality criterion is defined only for
parameters that are compatible with the data set $E$. This explains why only
$m$ appears directly in the criterion: the actual characteristics of the data
set influence indirectly the value of the criterion (for a given set of
parameters) via the compatibility equations from Definition
\ref{def:compatibility}. In particular, the degrees $\SDEG$ and $\DDEG$ are
fixed, and each triclustering $\CS$, $\CD$ and $\CT$ leads to a unique
compatible $\EC$. In this sense, the MODL criterion is really a triclustering
quality criterion. 

In addition, the evaluation criterion of Definition~\ref{eq:MODL} relies on
counting the number of possibilities for the model parameters and for the data
given the model. As negative log of probability amounts to a Shannon-Fano
coding length \citep{Shannon1948}, the criterion can be interpreted in terms
of description length.  The two first lines of the criterion correspond to the
description length of the triclustering $-\log P(\mathcal{M})$ (prior
probability) and the two last lines to the description length of the data
given the triclustering $-\log P(E|\mathcal{M})$ (likelihood).  Minimizing the
sum of these two terms therefore has a natural interpretation in terms of a
crude MDL (minimum description length) principle \citep{Grunwald2007}.
Triclustering fitting well the data get low negative log likelihood terms, but
too detailed triclusterings are penalized by the prior terms, mainly the
partition terms which grow with the size of the partitions and the assignment
parameters terms which grow with the number of triclusters.

\subsection{Optimization strategy}
The criterion $c(\mathcal{M})$ provides an exact analytic formula for the
posterior probability of the parameters $\mathcal{M}$, but the parameter space
to explore is extremely large. That is why the design of
sophisticated optimization algorithms is both necessary and meaningful. Such
algorithms are described by \cite{Boulle2010}.   

Interestingly while the assignment based representation allows one to define a
simple non informative prior on the parameters, it is not a realistic
representation for exploring the parameter space. Indeed there is no natural
and simple operator to move from one compatible assignment $\EC$ to another
one. On the contrary, working directly with the three partitions $\CS$, $\CD$
and $\CT$, and getting $\EC$ from the data (under the compatibility
constraints) is much more natural.

The criterion is indeed minimized using a greedy bottom-up merge heuristic. It
starts from the finest model, i.e the one with one cluster per vertex and one
interval per time stamp. Then merges of source clusters, of destination
clusters and of adjacent time intervals are evaluated and performed so that
the criterion decreases. This process is reiterated until there is no more
improvement, as detailed in Algorithm~\ref{alg:gbum}.

\begin{algorithm}
\caption{Greedy Bottom Up Merge Heuristic}
\label{alg:gbum}
\begin{algorithmic}
	\Require $\mathcal{M}$ (initial solution)
	\Ensure $\mathcal{M}^*\mbox{ ; } c(\mathcal{M}^*) \leq c(\mathcal{M})$
	\State $\mathcal{M}^* \gets \mathcal{M}$
	\While{$\mbox{solution is improved}$}
		\State $\mathcal{M}' \gets \mathcal{M}^*$
		\ForAll {merge $u$ between 2 source or destination clusters or adjacent time segments}
			\State $\mathcal{M}^+ \gets \mathcal{M}^*+u$
			\If {$c(\mathcal{M}^+)<c(\mathcal{M}')$}
				\State $\mathcal{M}' \gets \mathcal{M}^+$
			\EndIf
		\EndFor
		\If {$c(\mathcal{M}')<c(\mathcal{M}^*)$}
			\State $\mathcal{M}^* \gets \mathcal{M}'$ (improved solution)
		\EndIf
	\EndWhile
\end{algorithmic}
\end{algorithm}

The greedy heuristic may lead to computational issues and a naive
straightforward implementation would be barely usable because of a too high
algorithmic complexity. By exploiting both the sparseness of the temporal data
set and the additive nature of the criterion, one can reduce the memory
complexity to $O(m)$ and the time complexity to $O(m\sqrt{m}\log m)$.  The
optimized version of the greedy heuristic is time efficient, but it may fall
into a local optimum. This problem is tackled using the variable neighborhood
search (VNS) meta-heuristic \citep{Hansen2001}, which mainly benefits from
multiple runs of the algorithms with different random initial solutions to
better explore the space of models.  The optimized version of the greedy
heuristic as well as the meta-heuristics are described in details in
\cite{Boulle2010}.

\subsection{Simplifying the triclustering structure}\label{sec:simpl-tricl-struct}
When very large temporal data sets are studied, i.e. when $\mdata$ becomes
large compared to $|S|$ and $|D|$, the number of clusters of vertices and of
time stamps in the best triclustering may be too large for an easy
interpretation. This problem has been raised by \cite{White1976}, who suggest
an agglomerative method as an exploratory analysis tool in the context of
social networks analysis. We describe in this section a greedy
aggregating procedure that reduces this complexity in a principled way, using
only one user chosen parameter. 

The method we propose in this paper consists in merging successively the
clusters and the time segments in the least costly way until the triclustering
structure is simple enough for an easy interpretation. Starting from a
locally optimal set of parameters according to the criterion detailed in Equation~\eqref{eq:MODL}, 
clusters of source vertices, of destination vertices or time stamp ranks are
merged sequentially (in such way that time stamp partitions always respect the
order of the time stamps). At each step, the two clusters to merge are
the ones that induce the smallest increase of the value of the
criterion. This post-treatment is equivalent to an agglomerative
hierarchical clustering where the dissimilarity measure between two clusters
is the variation of the criterion due to this merge, as in the following
definition. 
\begin{definition}
\label{def:dissimilarity}
Let $\mathcal{M}$ be a triclustering and let $c_1$ and $c_2$ be two clusters
of $\mathcal{M}$ on the same variable (that is two source clusters, or two
destination clusters or two consecutive time stamp clusters). 

The MODL dissimilarity between $c_1$ and $c_2$ is given by
\begin{equation}
  \modldiss (c_1,c_2)=c(\mathcal{M}_{\text {merge } c_1 \text{ and }c_2})- c(\mathcal{M}),
\end{equation}
where $\mathcal{M}_{\text {merge } c_1 \text{ and }c_2}$ is the triclustering
obtained from $\mathcal{M}$ by merging $c_1$ and $c_2$ into a single cluster.

Appendix \ref{sec:interpr-diss-betw} provides some interpretations of this dissimilarity. 
\end{definition}

To handle the coarsening of a triclustering in practice, a measure of
informativeness of the triclustering is computed at each agglomerative
step of Algorithm~\ref{alg:gbum}. 
It corresponds to the percentage of informativity the triclustering has
kept after a merge, compared to a null model.
\begin{definition}[Informativity of a triclustering]\label{def:informativity}
  The null triclustering $\mathcal{M}_{\emptyset}$ has a single cluster of
  source vertices and a single cluster of destination vertices and one time
  segment. It corresponds to a stationary graph with no underlying
  structure. Given the best triclustering $\mathcal{M}^*$ obtained by optimizing the
  criterion defined in Definition~1, the informativity of a triclustering
  $\mathcal{M}$ is:

\begin{equation}
\tau(\mathcal{M})=\dfrac{c(\mathcal{M})-c(\mathcal{M}_{\emptyset})}{c(\mathcal{M}^*)-c(\mathcal{M}_{\emptyset})}.
\end{equation}
\end{definition}
By definition, $0 \leq \tau(\mathcal{M}) \leq 1$ for all triclusterings more
probable than the null triclustering. In addition, $\tau(\mathcal{M}_{\emptyset}) = 0$ and
$\tau(\mathcal{M}^*) = 1$.

The informativity is chosen (or monitored) by the analyst in order to stop the
merging process. This is the only user chosen parameter of our method. Notice
in particular that the merging process chooses automatically which variable to
coarsen: the user do not need to decide whether to reduce the number of
clusters on e.g. the source vertices versus the time stamps. 

In practice, the coarsening can be seen as a modification of
Algorithm~\ref{alg:gbum}. Rather than accepting a merge only if the quality criterion
is increased, the algorithm selects the best merge in term of the quality of the
obtained triclustering (in the inner for loop) and proceeds this way until the
triclustering is reduced to only one cluster or the informativity drops below
a user chosen value (in the outer while loop). 

\section{Experiments on artificial data sets}\label{sec:exper-artif-data}
Experiments have been conducted on artificial data in order to investigate the
properties of our approach. To that end, we generate artificial graphs with
known underlying time evolving structures (see
\cite{guigouresboulleetal2012triclustering-approach} for complementary
experiments on a graph with unbalanced clusters).

\subsection{Data sets}
Experiments are conducted on temporal graphs in which the edge
structure changes through time from a quasi-co-clique pattern where
edges are concentrated between different clusters to a quasi-clique
pattern where edges are concentrated inside clusters.

More precisely, we consider given a source vertex set $S$ and a target vertex
set $D$, both partitioned into $k$ balanced clusters, respectively $(A^S_i)_{1\leq
  i\leq k}$ and $(A^D_j)_{1\leq j\leq k}$. The time interval is
arbitrarily fixed to $[0,1]$. On this interval, a function $\Theta$ is defined
with values in the set of squared $k\times k$ matrices by:
\begin{equation}  
 \Theta(t) = \left\{
    \begin{array}{ll}
        \theta_{ii}(t) = \frac{0.9t + 0.1(1-t)}{k}, \\
        \theta_{ij}(t) = \frac{0.1t + 0.9(1-t)}{k(k-1)} \mbox{ when } i \neq j.
    \end{array}
\right.
\end{equation}   
The term $\theta_{ij}(t)$ can be seen as a connection probability between
a source vertex in cluster $A^S_i$ and a destination vertex in $A^D_j$ (this
is slightly more complex, as explained below). In particular, when $t=0$,
connections will seldom appear inside diagonal clusters, while they will
concentrate on the diagonal when $t=1$ (see Figure \ref{fig:patterns}).  

Given $k$ and $m$ a number of edges to generate, a temporal graph is obtained
by building each edge $e_l=(s_l, d_l, t_l)$ according to the following procedure:
\begin{enumerate}
\item $t_l$ is chosen uniformly at random in $[0,1]$;
\item the clusters indexes $(u_l,v_l)$ are chosen according to the categorical
  distribution on all the pairs $(i,j)_{1\leq  i\leq k, 1\leq   j\leq k}$
  specified by $\Theta(t_l)$ (that is $P(u_l=i, v_l=j)=\theta_{ij}(t_l)$);
\item $s_l$ is chosen uniformly at random in $A^S_{u_l}$ and $d_l$ is chosen
  uniformly at random in $A^D_{v_l}$. 
\end{enumerate}
\begin{figure}[ht!]
\centering
        \begin{subfigure}[b]{0.24\textwidth}
                \centering
                \includegraphics[width=\textwidth]{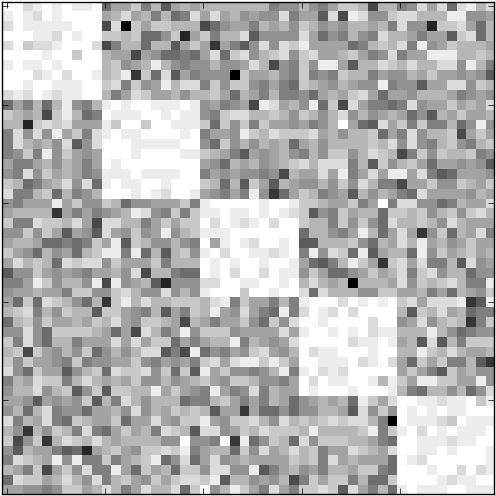}
                \caption{$[0,\epsilon]$}
                \label{fig:0}
        \end{subfigure}
				\,
        \begin{subfigure}[b]{0.24\textwidth}
                \centering
                \includegraphics[width=\textwidth]{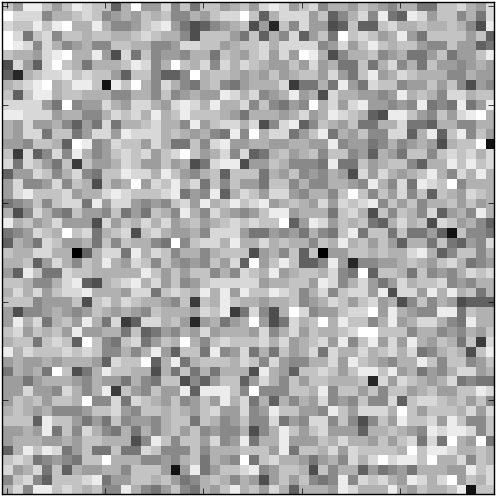}
                \caption{$[0.2,0.2+\epsilon]$}
                \label{fig:02}
        \end{subfigure}%
				\,
        \begin{subfigure}[b]{0.24\textwidth}
                \centering
                \includegraphics[width=\textwidth]{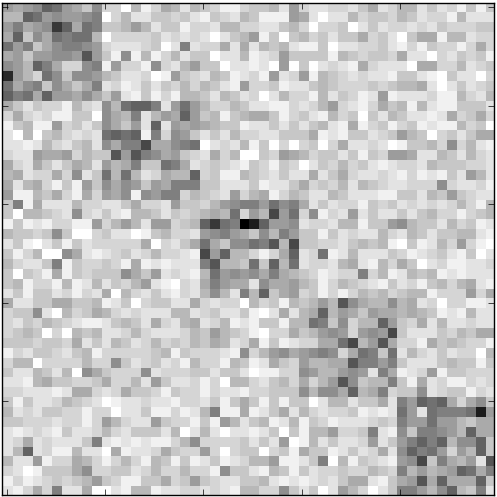}
                \caption{$[0.5,0.5+\epsilon]$}
                \label{fig:05}
        \end{subfigure}%
				\,
        \begin{subfigure}[b]{0.24\textwidth}
                \centering
                \includegraphics[width=\textwidth]{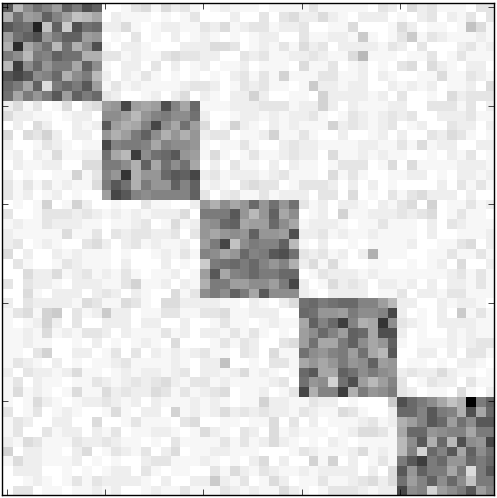}
                \caption{$[1-\epsilon,1]$}
                \label{fig:1}
        \end{subfigure}%
 \caption{Sample of graphs for four time values. $n=50$ vertices, $m=10^6$ edges, $k=5$ clusters, $\epsilon=10^{-2}$}%
\label{fig:patterns}
\end{figure}
Notice that this procedure is different from what is done in stochastic block
models \citep{Snijders2001} and related models as it aims at mimicking
repeated interactions. The procedure is also quite different from
the generative approach detailed in Section \ref{sec:gener-model-temp} and does
not favor our model.

Two additional methods are also used to make the data more complex. The first
one consists in randomly reallocating the three variables (source vertex,
destination vertex and time stamp) for a randomly selected subset of
edges. The reallocation is made uniformly at random independently on each
variable. The percentage of reallocated edges measures the difficulty of the
task. The second complexity increasing method (applied independently) consists
in shuffling the time stamps to remove the temporal structure from the
interaction graph. Finally, we use also Erd\H{o}s-R\'enyi random graphs with
time stamps chosen uniformly at random in $[0,1]$ to study the robustness of
the method.

\subsection{Results}
We report results with $k=5$ clusters, 50 source vertices and 50 destination
vertices. Edge number varies from $2$ to $2^{20}$ (considering all powers of
2). For a given number of edges, we generate 20 different graphs. On 10 of
them, we applied the reallocation procedure described above for 50 \% of the
edges. 

\begin{figure}[hbt!]%
\centering
        \begin{subfigure}[b]{\linewidth}
                \centering
                \includegraphics[width=0.75\linewidth,trim = 3cm 1cm 3cm 1cm, clip]{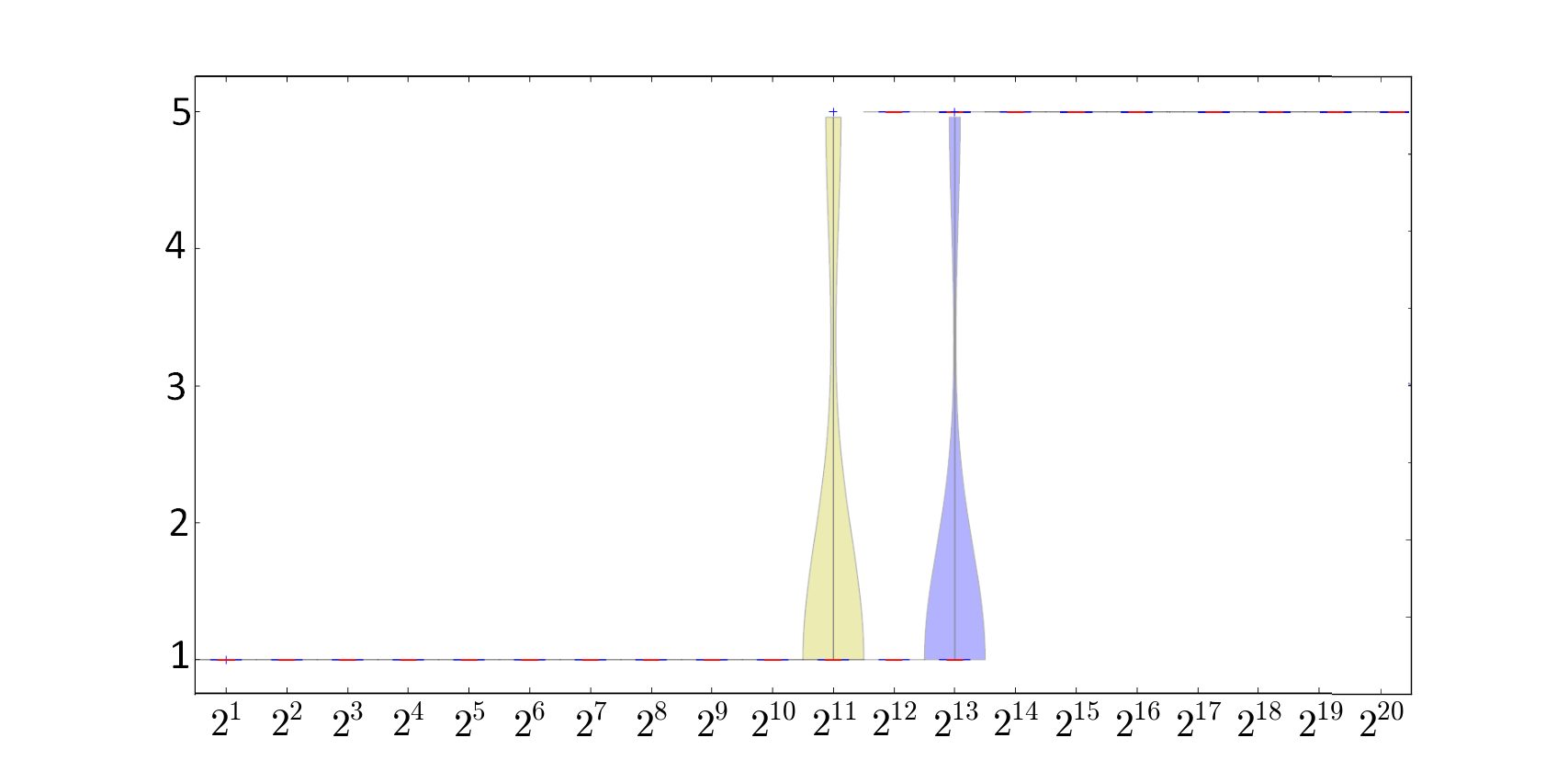}
                \caption{Violin plot of the number of clusters detected by the
                  proposed approach as a function of the number of
                  edges (yellow: no noise, violet: $50\%$ noise).}
                \label{fig:clusters}
        \end{subfigure}%
        \\
        \begin{subfigure}[b]{\linewidth}
                \centering
                \includegraphics[width=0.75\linewidth,trim = 3cm 1cm 3cm 1cm, clip]{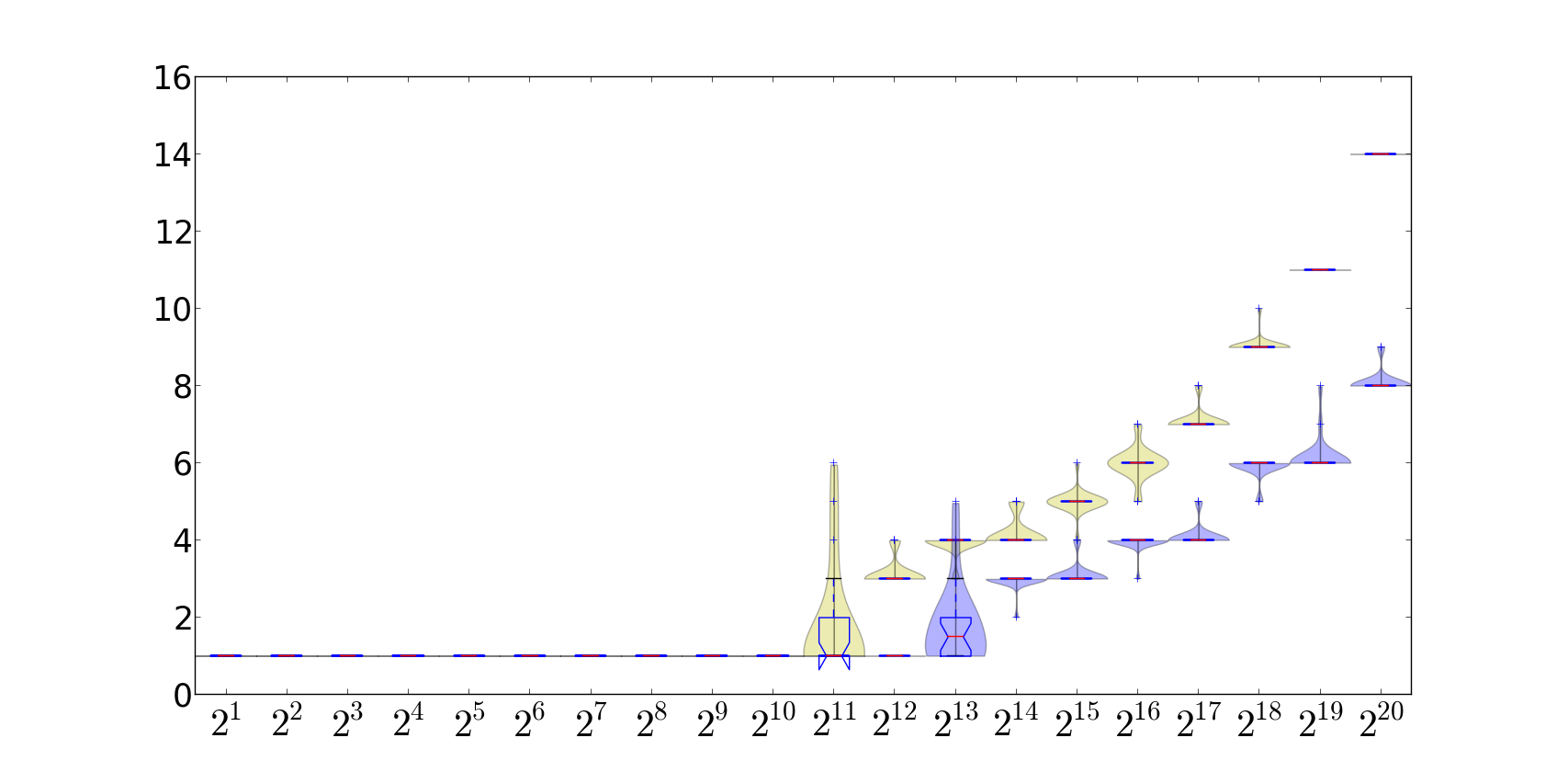}
                \caption{Violin plot of the number of time segments detected by the
                  proposed approach as a function of the number of
                  edges (yellow: no noise, violet: $50\%$ noise).}
                \label{fig:intervals}
        \end{subfigure}%
 \caption{Results for graphs with a temporal structure and two levels of noise
 (no noise and 50 \% of reallocated edges). Violin plots
 \citep{HintzeNelson1998ViolinPlot} combine a box plot and a density
 estimator, leading here to a better view of the variability of the results
 than e.g. standard deviation bars.}%
\label{fig:results}
\end{figure}

\paragraph{Temporal Graphs.} The Figures \ref{fig:clusters} and
\ref{fig:intervals} display respectively the average number of clusters of
vertices and the average number of time segments selected by the MODL
approach, in graphs in which the structure is preserved either completely (no
noise) or partly ($50 \%$ of reallocated edges). Bars show the standard
deviation of the number of clusters/segments. They are generally non visible
as the results are very stable excepted during the transition between the low
number of edges to the high number of edges. 

For a small number of edges (below $2^{10}$), the method does not discover any
structure in the data in the sense that the (locally) optimal triclustering has only one
cluster for each variable. The number of edges is too small for the method to
find reliable patterns: the gain in likelihood does not compensate the
reduction in a posteriori probability induced by the complexity of the
triclustering itself. Between $2^{11}$ and $2^{12}$, data are numerous enough
to detect clusters but too few to support the detection of the true underlying
structure (the results are somewhat unstable at this point and the actual
number of clusters discovered by the method varies between each generated
graph). Finally, beyond $2^{12}$ edges, we have enough edges to
retrieve the true structure. More precisely, the number of clusters of source
and destination vertices reaches the true number of clusters and their content
agree, while the number of time segments increases with the number of
edges. This shows the good asymptotic behavior of the method: it retrieves the
true actor patterns and exploits the growing number of data to better
approximate the smooth temporal evolution of the connectivity
structure. Indeed $\Theta(t)$ is a $C^\infty$ function with bounded (constant)
first derivatives and is therefore smooth, with no brutal changes.

Notice finally that the behavior of the method is qualitatively similar on the
noisy patterns as on the noiseless ones, but that the convergence to the
true structure and the growth of the number of temporal clusters are slower in
the noisy case, as expected. 

\paragraph{Stationary Graphs.} When the temporal structure is destroyed by the
time stamp shuffling, the method does not partition the time stamps, leaving
them in a unique cluster, regardless of the number of edges. Given enough

edges ($2^{13}$ without noise and $2^{15}$ with $50\%$ noise), vertex clusters are recovered perfectly. This shows the efficiency of
the regularization induced by the prior distribution on the parameters. As in
the case of the temporal graph, disturbing the structure via reallocating
edges postpone the detection of the clusters to a larger number of edges.

\paragraph{Random Graphs.} When applied to Erd\H{o}s-R\'enyi random graphs
with no structure (neither actor clustering, nor temporal evolution), the
method selects as the locally optimal triclustering the one with only one cluster on
each dimension, as expected for a non overfitting method.

\section{Experiments on a Real-Life data set}\label{sec:exper-real-life}

Experiments on a real-life data set have been conducted in order to illustrate
the usefulness of the method on a practical case.

\subsection{The London cycles data set}

The data set is a record of all the cycle hires in the Barclays cycle stations
of London between May 31st, 2011 and February 4th, 2012. The data are
available on the website of TFL\footnote{Transport for London,
  \url{http://www.tfl.gov.uk}}. The data set consists in $488$ stations and
$\mdata=4{.}8$ million journeys. It is modelled as a graph with the departure
stations as source vertices, the destination stations as destination vertices
and the journeys as edges, with time stamps corresponding to the hire time with
minute precision. In this data set $S=D$ (with $|S|=488$) as every station is
the departure station and the arrival station of some journeys. 

\subsection{Most refined triclustering}
By applying the proposed method\footnote{On a standard desktop PC, this takes
  approximately 50 minutes, with a maximal memory occupation of 4.5 GB.} to
this data set, we obtain $296$ clusters of source stations, $281$ clusters of
destination stations and 5 time stamp clusters. Most of the clusters consist
in a unique station, leading to a very fine-grained clustering on the
geographical/spatial point of view. This is not the result of some form of
overfitting: due to the very large number of bicycle hires compared to the
number of stations, the distributions of edges coming from/to the vertices are
characteristic enough to distinguish the stations, in particular
because many journeys are locally distributed around a source station. On the
contrary, and perhaps surprisingly, the temporal dynamic is quite simple as
only $5$ time stamp clusters are identified. We label them as follows: the
\textit{morning} (from 7.06AM to 9.27AM), the \textit{day} (from 9.28AM to
3.25PM), the \textit{evening} (from 3.26PM to 6.16PM), the \textit{night}
(from 6.17PM to 4.12AM) and the \textit{dawn} (from 4.13AM to 7.05AM).

\subsection{Simplified triclustering}
We apply the exploratory post-processing described in Section
\ref{sec:simpl-tricl-struct} in order to study a simplified
triclustering. Clusters of stations are successively merged until obtaining
$20$ clusters of both departure and destination stations while the number of
time stamp clusters remains unchanged. By applying this post-processing
technique, $70\%$ of the informativity of the most refined triclustering is retained
(see Definition \ref{def:informativity}). Notice that the merging algorithm is
not constrained to avoid merging time intervals and/or to balance departure
and destination clusters. On the contrary, each merging
step is chosen optimally between all the possible merges on each of the three
variables available at this stage. This shows that while the temporal
structure is simple, it is very significant on a statistical point of view.

While the data set does not contain explicit geographic information, a
detailed analysis of the clusters reveals that the clustered stations are in
general geographically correlated. This is a natural phenomenon in a bike
share system where short journeys are favored both by the pricing structure
and because of the physical effort needed to travel from one point to
another. A notable exception is observed for the cycle stations in front of
Waterloo and King's Cross train stations (white discs on
Figure~\ref{fig:source:clusters}) that have been grouped together while they are quite
distant. This specific pattern is detailed and interpreted in
Section~\ref{sec:detailed-visualization}, using an appropriate
visualization method.

\begin{figure}[htbp]
  \centering
  \includegraphics[width=0.9\textwidth]{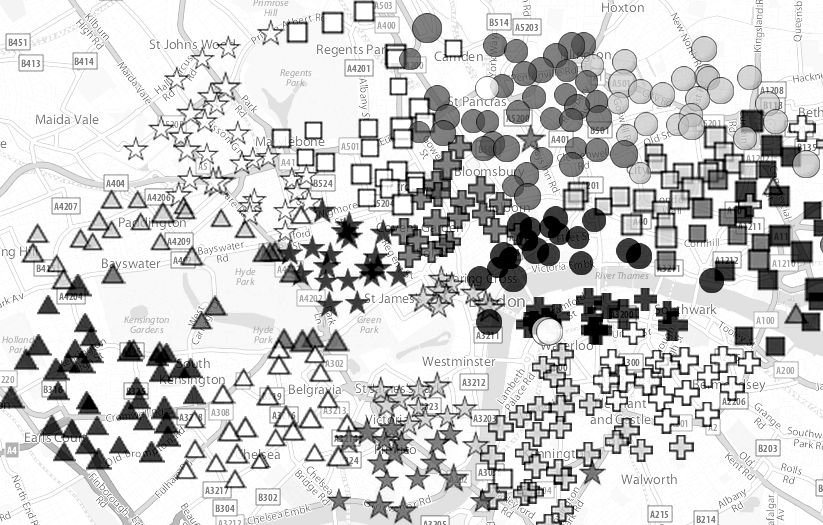}
 \caption{Clusters of source stations: each station is represented by a symbol
   whose shape and level of gray is specific to the corresponding source cluster.}
  \label{fig:source:clusters}
\end{figure}

The triclusterings obtained by our method are not constrained to yield
identical results on $S$ and $D$ even if $S=D$ (which is the case here). This
would be an important limitation as it would constraint an actor to have the
same role as a source than as a destination. In the bike share data set, we
obtain comparable but not identical clustering structures on the set of source and
destination vertices. The main notable difference lies on the segmentation of
the financial district of London: one single destination cluster covers the
area while it is split into two source clusters (the two types of gray squares on 
Figure~\ref{fig:source:clusters} form the source clusters, while most red
discs on the right hand side of Figure \ref{fig:mi:notime} form the
destination cluster). 

\subsection{Detailed Visualization}\label{sec:detailed-visualization}
The triclustering obtained with our method can help understanding the
corresponding temporal data set, in particular when it is used to build
specialized visual representations, as illustrate below. 

In order to better understand the partition of the stations, we investigate
the distribution of journeys originating from (resp. terminating to) the
clusters. To that end, we study the contribution to the mutual information of
each pair of source/destination stations. We first define more formally the
distributions under study. We denote $\scprob$ the probability
distribution on $\{1,\ldots,k_S\}$ given by
\begin{equation}
\scprob(\{i\})=\frac{\secmargin{i}}{\mdata}.
\end{equation}
It corresponds to the empirical distribution of the clusters in the data
set. Similarly, we denote $\dcprob$ the probability distribution on
$\{1,\ldots,k_D\}$ given by
\begin{equation}
\dcprob(\{j\})=\frac{\decmargin{j}}{\mdata}.
\end{equation}
Finally, the joint distribution $\sdcprob$ on $\{1,\ldots,k_S\}\times
\{1,\ldots,k_d\}$ is given by
\begin{equation}
\sdcprob(\{(i, j)\})=\frac{\sum_{l=1}^{k_T}\ec_{ijl}}{\mdata}.
\end{equation}
To measure the dependencies between the source and destination vertices at the
cluster level, we use the \emph{mutual information} \citep{cover} between the cluster
distribution, that is
\begin{equation}\label{eq:MutualInfo}
MI_C^{S,D}=\sum_{i=1}^{k_S}\sum_{j=1}^{k_D}\sdcprob(\{(i, j)\})\log
\frac{\sdcprob(\{(i, j\})}{\scprob(\{i\})\dcprob(\{j\})}.
\end{equation}
Mutual information is necessarily positive and its normalized version (NMI) is
commonly used as a quality measure in the co-clustering problems
\citep{Strehl2003}. Here, we only focus on the contribution to mutual
information of a pair of source/destination clusters. This value can be
either positive or negative according to whether the observed joint
probability of journeys $\sdcprob(\{(i, j\})$ is above or below the expected
probability $\scprob(\{i\})\dcprob(\{j\})$ in case of independence. Such a measure
quantifies whether there is a lack or an excess of journeys between two
clusters of stations in comparison with the expected number.

\begin{figure}[htbp]%
\centering
  \includegraphics[width=0.9\textwidth]{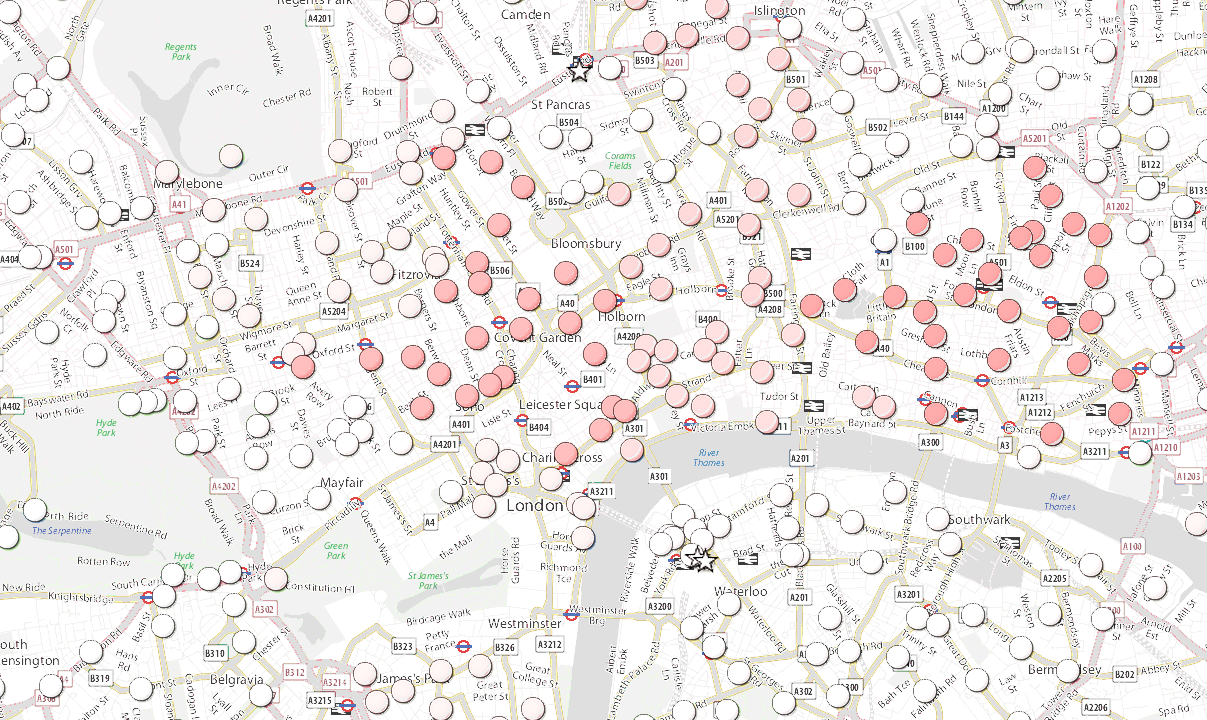}
 \caption{Destination cluster contributions to the mutual information between
   the source cluster 'Waterloo/King's Cross' (stations drawn using stars) and all the destination
   clusters. Within a destination cluster, all stations share the same
   color whose intensity is proportional to the contribution of the cluster
   to the mutual information. Positive contributions are represented in red,
   negative in blue. The present figure shows mainly positive or null contributions (no blue circles).}
\label{fig:mi:notime}
\end{figure}

For instance, Figure~\ref{fig:mi:notime} shows an excess of journeys from the
\textit{Waterloo} and \textit{King's Cross} train stations to the central
areas of London. Both train stations being major intercity railroad stations,
we can assume that people there have the same behavior and all converge to the
same points in London: the business districts. This convergence pattern
explains why distant cycle stations can be grouped in the same cluster.

In this first analysis, the time variable is not taken into account. It can be
integrated into a visualization by considering for instance the dependency
between the time stamp clusters on one hand and pairs of source and
destination clusters on the other hand. We first define $\tcprob$ the
probability distribution on $\{1,\ldots,k_T\}$ by
\begin{equation}
\tcprob(\{l\})=\frac{\tecmargin{l}}{\mdata}.
\end{equation}
The full joint distribution on the clusters is given by the probability
distribution $\fullcprob$ on $\{1,\ldots,k_S\}\times
\{1,\ldots,k_D\}\times\{1,\ldots,k_T\}$ given by
\begin{equation}
\fullcprob(\{(i, j, l)\})=\frac{\ec_{ijl}}{\mdata}.
\end{equation}
Then we display the individual contributions to the mutual information between
pairs of source/destination clusters and time clusters:
\begin{equation}
MI_C^{(S,D),T}=
\sum_{i=1}^{k_S}\sum_{j=1}^{k_D}\sum_{l=1}^{k_T}\fullcprob(\{(i, j, l)\})\log
\frac{\fullcprob(\{(i, j, l\})}{\sdcprob(\{(i, j)\})\tcprob(\{l\})}.
\label{eq:MutualInfoCoupleTime}
\end{equation}

Similarly to the previous measure, this one aims at showing the pairs of
clusters between which there is an excess of traffic compared to the usual
daily traffic between these stations and the usual traffic at this period in
London. For example, for the source cluster \textit{Waterloo/King's Cross},
the traffic is higher than expected on mornings to the destination clusters
located in the center of London (see Figure~\ref{fig:HPday:morning}). By contrast
there is a lack of evening journeys (see Figure~\ref{fig:HPday:evening}). These
results are not really surprising because we can assume that in the mornings,
people use the cycles as a mean of transport to their office rather than as a
leisure activity.

\begin{figure}[htbp]%
\centering
  \includegraphics[width=.9\textwidth]{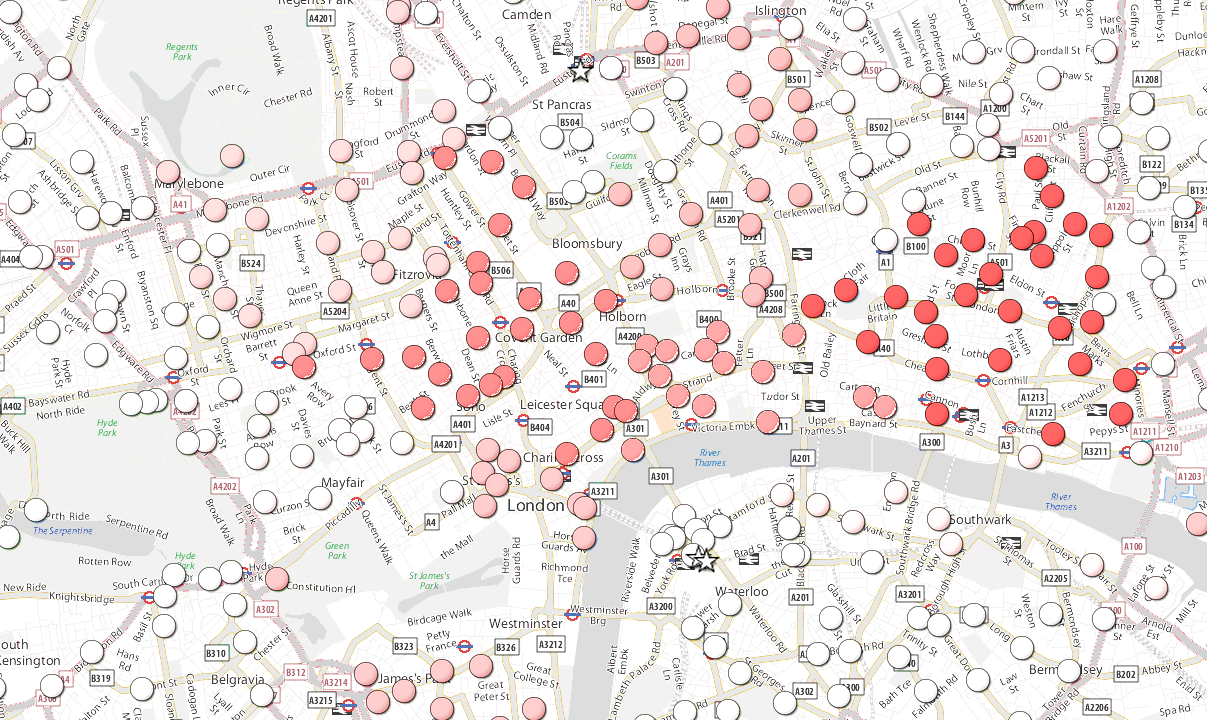} 
  \caption{Each station is colored according to the contribution of its
    destination cluster and of the source cluster
    \textit{Waterloo/King's Cross} (stations drawn using stars) to the mutual
    information between the source/destination pairs and the time segments. As in
    Figure \ref{fig:mi:notime}, color intensity measures the absolute value of
    the contribution, while the sign is encoded by the hue (red for positive
    and blue for negative). In the present figure, the time segment is the
    morning one, with mainly positive or null contributions (no blue circles).}
\label{fig:HPday:morning}
\end{figure}

\begin{figure}[htbp]%
\centering
  \includegraphics[width=.9\textwidth]{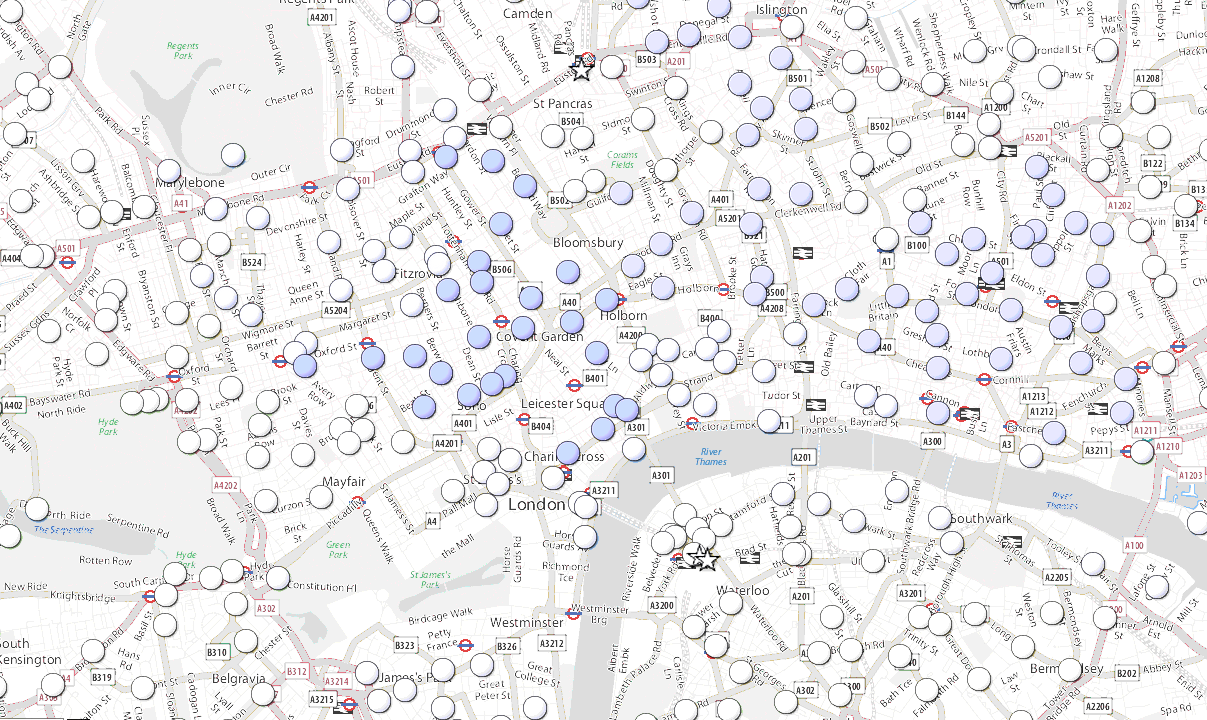}
  \caption{Mutual information contribution for the evening time segment. See
    Figure \ref{fig:HPday:morning} for details.
    The present figure shows mainly negative or null contributions (no red circles).}
\label{fig:HPday:evening}
\end{figure}

\section{Conclusion} \label{sec:conclusion}
This paper introduces a new approach for discovering patterns in time evolving
graphs, a type of data in which interactions between actors are time
stamped. The proposed approach, based on the MODL methodology, operates by
grouping in clusters source vertices, destination vertices and time stamps in
the same procedure. Time stamps clusters are constrained to respect their
ordering, leading to the construction of time intervals. The proposed method
is related to co-clustering in that we consider the graph as a set of edges
described by three variables: source vertices, destination vertices and
time. All of them are simultaneously partitioned in order to build time interval
on which the interactions between actors can be summarized at the cluster
level. This approach is particularly interesting because it does not require
any data preprocessing, such as an aggregation of time stamps or a selection
of significant edges. Moreover the evolving structure of the graph is tracked
in one unique step, making the approach more reliable to study the temporal
graphs. Its good properties have been assessed with experiments on artificial
data sets. The method is reliable because it is resilient to noise and
asymptotically finds the true underlying distribution. It is also suitable in
practical cases as illustrated by the study on the cycles renting system of
London. In future works, such a method could be extended to co-clustering in
k-dimensions, adding labels to the vertices or another temporal feature, such
as the day of week or the duration of an interaction for example. This would
allow us for instance to model the cycles renting system in more details by
taking into account both the departure time and the arrival time of a bike ride. 
A more ambitious goal would be to allow more complex
clustering structures. Indeed in this paper, vertex clusters are time
independent, while it would make sense to allow some time dependencies to the
clustering. In our framework, a possibility would be to retain more clusters
during a some time intervals and less during others, when the structure is
simplified. In other words, two clusters of vertices could be merged on
interval $[t_1,t_2]$ but kept separated during interval $[t_2,t_3]$. This
would allow tracking the complexity of interaction patterns in a non uniform
way through time, rather in the implicitly uniform way we handle them in the
current method.

\section*{Acknowledgment}
The authors thank the anonymous reviewers and the associate editor for their
valuable comments that helped improving this paper. 

\appendix

\section{Interpretations of the dissimilarity between two clusters}\label{sec:interpr-diss-betw}
Interestingly, the dissimilarity given in Definition~\ref{def:dissimilarity}
receives several interpretations. It corresponds to a loss of coding length
(when the MODL criterion is interpreted as a description length), a loss of
posterior probability of the triclustering given the data (see
Proposition~\ref{th:dissimilarity}), and asymptotically to a divergence
between probability distributions associated to the clusters (see
Proposition~\ref{th:divergence}).

\begin{proposition}
\label{th:dissimilarity}
The exponential of the dissimilarity between two clusters, $c_1$ and $c_2$,
gives the inverse ratio between the probability of the simplified triclustering
given the data set and the probability of the original
triclustering given the data set:
\begin{equation}
P(\mathcal{M}|E)=e^{\modldiss(c_1,c_2)}  P(\mathcal{M}_{\text {merge } c_1 \text{ and }c_2}|E).
\end{equation}
\end{proposition}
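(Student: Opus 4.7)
The plan is to unwind the definition of the MODL criterion as a negative log joint probability and then apply Bayes' rule. First I would recall from Section \ref{sec:modl-criterion} that the criterion $c(\mathcal{M})$ was constructed precisely as the negative logarithm of $P(\mathcal{M})\,P(E|\mathcal{M})$: the first two lines of Equation~(\ref{eq:MODL}) collect the $-\log$ of each of the hierarchical prior factors introduced in the subsection on the prior distribution (the $1/|S|$, $1/|D|$, $1/\mdata$ terms for the maximal number of clusters, the inverse Bell-type sums $1/B(\cdot,\cdot)$ for the partitions, the multinomial coefficient for $\EC$, and the two degree-assignment binomials), while the last two lines reproduce exactly $-\log\Like(\mathcal{M}|E)$ as given by the product of the mapping, source, destination and time-ordering probabilities. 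So $c(\mathcal{M}) = -\log P(\mathcal{M}) - \log P(E|\mathcal{M})$ holds as an identity of real numbers, not just up to an additive constant.

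Then I would invoke Bayes' rule, $P(\mathcal{M}|E) = P(\mathcal{M})\,P(E|\mathcal{M})/P(E)$, to obtain
\begin{equation}
-\log P(\mathcal{M}|E) = c(\mathcal{M}) + \log P(E).
\end{equation}
The same identity holds for the merged triclustering $\mathcal{M}_{\text{merge } c_1 \text{ and } c_2}$ with the same data set $E$, hence with the same data-marginal term $\log P(E)$. Subtracting the two identities cancels $\log P(E)$ and yields
\begin{equation}
\log P(\mathcal{M}|E) - \log P(\mathcal{M}_{\text{merge } c_1 \text{ and } c_2}|E) = c(\mathcal{M}_{\text{merge } c_1 \text{ and } c_2}) - c(\mathcal{M}) = \modldiss(c_1,c_2),
\end{equation}
which is exactly the claim after exponentiation.

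The only real thing to check is the step where $c(\mathcal{M})$ is identified with $-\log P(\mathcal{M})P(E|\mathcal{M})$ without any hidden constant or compatibility issue; the proposition is implicitly restricted to $\mathcal{M}$ compatible with $E$ (otherwise the likelihood vanishes and the posterior is zero). I would verify term-by-term that each of the eight prior factors and the four likelihood factors reappears in Equation~(\ref{eq:MODL}) with the correct sign, which is the only potential source of error and the main, though routine, obstacle. Note also that the merge operation preserves compatibility with $E$ (merging two source, destination, or adjacent time clusters just sums the corresponding $\ec_{ijl}$ and marginal counts), so $\mathcal{M}_{\text{merge } c_1 \text{ and } c_2}$ has a well-defined nonzero posterior and division by $P(\mathcal{M}_{\text{merge } c_1 \text{ and } c_2}|E)$ is legitimate.
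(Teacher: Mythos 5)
Your argument is correct and is exactly the reasoning the paper relies on (it states the proposition without a detailed proof precisely because it follows from the construction): $c(\mathcal{M})=-\log P(\mathcal{M})-\log P(E|\mathcal{M})=-\log P(\mathcal{M}|E)-\log P(E)$ for any $\mathcal{M}$ compatible with $E$, so the difference of criterion values defining $\modldiss(c_1,c_2)$ equals $\log P(\mathcal{M}|E)-\log P(\mathcal{M}_{\text{merge } c_1 \text{ and } c_2}|E)$, and exponentiating gives the claim. Your added remarks on compatibility being preserved under merges and on the cancellation of $\log P(E)$ are the right points to check and are handled correctly.
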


Asymptotically - i.e when the number of edges tends to infinity - the
dissimilarity between two clusters is proportional to a generalized
Jensen-Shannon divergence between two distributions that characterize the
clusters in the triclustering structure. To simplify the discussion, we give
only the definition and result for the case of source clusters, but this can
be generalized to the two other cases.

\begin{definition}
Let $\mathcal{M}$ be a triclustering. For all $i\in\{1,\ldots,k_S\}$ we denote
\begin{equation}
\sprob_i=\left(\frac{\ec_{ijl}}{\secmargin{i}}\right)_{1\leq j\leq k_D, 1\leq
    l\leq k_T}.
\end{equation}
The matrix $\sprob_i$ can be interpreted as a probability distribution over
$\{1, \ldots, k_D\}\times\{1, \ldots, k_T\}$. It characterizes $\cs_i$ as a
cluster of source vertices as seen from clusters of destination vertices and
of time stamps. 

We denote $\sprob$ the associated marginal probability distribution obtained by
\begin{equation}
\sprob=\left(\frac{\sum_{i=1}^{k_S}\ec_{ijl}}{\sum_{i=1}^{k_S}\secmargin{i}}\right)_{1\leq j\leq k_D, 1\leq
    l\leq k_T}.
\end{equation}
Obviously, we have
\begin{equation}
\sprob=\sum_{i=1}^{k_S}\pi_i\sprob_i,
\end{equation}
where
\begin{equation}
\pi_i=\frac{\secmargin{i}}{\sum_{k=1}^{k_S}\secmargin{k}}.
\end{equation}
\end{definition}

\begin{proposition}
  \label{th:divergence}
Let $\mathcal{M}$ be a triclustering and let $\cs_i$ and $\cs_k$ be two source
clusters. Then
\begin{equation}
  \dfrac{\modldiss(\cs_i,\cs_k)}{\mmodel}\underset{\mmodel\to +\infty}{\longrightarrow} (\pi_i+\pi_k) JS^{\alpha_i,\alpha_k} (\sprob_i,\sprob_k), 
  \label{eq:KL}
\end{equation}
with
\begin{equation}
 JS^{\alpha_i,\alpha_k} (\sprob_i,\sprob_k)=\alpha_i KL(\sprob_i || \alpha_i \sprob_i + \alpha_k \sprob_k) + \alpha_k KL(\sprob_k || \alpha_i \sprob_i + \alpha_k \sprob_k),
\end{equation}
and where $\alpha_i$ and $\alpha_k$ are the normalized mixture coefficients
such as $\alpha_i = \frac{\pi_i}{\pi_i+\pi_k}$ and $\alpha_k =
\frac{\pi_k}{\pi_i+\pi_k}$
      \end{proposition}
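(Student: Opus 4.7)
\medskip

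\noindent\textbf{Proof proposal.} The plan is to do a direct asymptotic expansion of $\modldiss(\cs_i,\cs_k)=c(\mathcal{M}_{\text{merge}})-c(\mathcal{M})$ by isolating the terms of Equation~\eqref{eq:MODL} that actually change under the merge, replacing factorials by Stirling's formula, and then re-expressing the dominant $\Theta(m)$ contribution in terms of the cluster distributions $\sprob_i,\sprob_k$ and their $(\alpha_i,\alpha_k)$-mixture.

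First I would list the terms of $c(\mathcal{M})$ that are affected when $\cs_i$ and $\cs_k$ are collapsed into a single source cluster. On the prior side these are $\log B(|S|,k_S)$, the assignment binomial $\log\binom{m+k_Sk_Dk_T-1}{k_Sk_Dk_T-1}$, and the two degree binomials $\log\binom{\secmargin{i}+|\cs_i|-1}{|\cs_i|-1}$ and $\log\binom{\secmargin{k}+|\cs_k|-1}{|\cs_k|-1}$ (which merge into a single one). On the likelihood side the affected terms are $-\sum_{j,l}\bigl(\log\ec_{ijl}!+\log\ec_{kjl}!\bigr)$ and $\log\secmargin{i}!+\log\secmargin{k}!$. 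All other terms of $c$ are identical on both sides of the difference and cancel.

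Next I would apply Stirling in the form $\log n!=n\log n - n + \tfrac12\log(2\pi n)+O(1/n)$. Collecting the surviving $n\log n$ contributions, the linear $-n$ parts cancel exactly (because the total count is preserved: $\sum_{j,l}(\ec_{ijl}+\ec_{kjl})=\secmargin{i}+\secmargin{k}$), and a short computation reduces the leading part of $\modldiss(\cs_i,\cs_k)$ to
\begin{equation*}
(\secmargin{i}+\secmargin{k})\,H\!\left(\alpha_i\sprob_i+\alpha_k\sprob_k\right)-\secmargin{i}H(\sprob_i)-\secmargin{k}H(\sprob_k),
\end{equation*}
where $H$ denotes the Shannon entropy, once one substitutes $\ec_{ijl}=\secmargin{i}\,(\sprob_i)_{jl}$ and $\ec_{kjl}=\secmargin{k}\,(\sprob_k)_{jl}$ and uses $\alpha_i=\secmargin{i}/(\secmargin{i}+\secmargin{k})$. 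Factoring out $\secmargin{i}+\secmargin{k}=m(\pi_i+\pi_k)$ and invoking the standard identity $JS^{\alpha_i,\alpha_k}(\sprob_i,\sprob_k)=H(\alpha_i\sprob_i+\alpha_k\sprob_k)-\alpha_i H(\sprob_i)-\alpha_k H(\sprob_k)$ (which follows directly from the definitions of $KL$ and $H$) yields the claimed limit, up to remainder terms.

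Finally I would verify that the remainder is $o(m)$ so that dividing by $m$ gives the announced convergence. The Stirling halves $\tfrac12\log(2\pi n)$ and $O(1/n)$ contribute $O(k_Dk_T\log m)$, the prior changes contribute $O(1)+O(\log m)$ (the partition terms $\log B(|S|,\cdot)$ are $O(1)$, the assignment binomial differs by $O(\log m)$ since $k_Dk_T$ is held fixed, and the degree binomials likewise), so the total error is $O(\log m)$ and vanishes once divided by $m$. The main obstacle, and where I would be most careful, is this bookkeeping step: ensuring that all linear-in-$m$ contributions outside the entropy terms cancel (in particular checking that the subtracted $\secmargin{i}\log\secmargin{i}$ and $\secmargin{k}\log\secmargin{k}$ pieces from the $\log\ec!$ and $\log\secmargin{}!$ groups annihilate against each other) and that every prior-side term one could worry about really is only of order $\log m$. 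Once this accounting is done, the identification with the generalized Jensen--Shannon divergence is immediate.
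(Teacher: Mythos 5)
Your proposal is correct and follows exactly the route the paper indicates: the paper's proof is only a one-line sketch (compute the difference of the criterion before and after the merge and apply Stirling's approximation $\log n! = n\log n - n + O(\log n)$), and your bookkeeping of the affected terms, the cancellation of the linear parts, the reduction to $(\secmargin{i}+\secmargin{k})H(\alpha_i\sprob_i+\alpha_k\sprob_k)-\secmargin{i}H(\sprob_i)-\secmargin{k}H(\sprob_k)$, and the identification with the generalized Jensen--Shannon divergence fill in precisely that sketch. The only detail worth adding is the implicit asymptotic regime (fixed vertex sets, fixed numbers of clusters and fixed proportions $\ec_{ijl}/\mmodel$ as $\mmodel\to\infty$), under which your $O(\log \mdata)$ estimates for the prior and Stirling remainder terms are valid.
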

      \begin{proof}
$JS$ is the generalized Jensen-Shannon Divergence \citep{Lin1991} and $KL$, the Kullback-Leibler Divergence. The full proof is left out for brevity and relies on the Stirling approximation: $\log n!= n \log(n) - n + O(\log n)$, when the difference between the criterion value after and before the merge is computed.
\end{proof}

The Jensen-Shannon divergence has some interesting properties: it is a
symmetric and non-negative divergence measure between two probability
distributions. In addition, the Jensen-Shannon divergence of two identical
distributions is equal to zero. While this divergence is not a metric, as it
is not sub-additive, it has nevertheless the minimal properties needed to be
used as a dissimilarity measure within an agglomerative process in the context
of co-clustering \citep{Slonim1999}.

\bibliographystyle{plainnat}
\bibliography{biblio}

\end{document}